\documentclass[twocolumn]{IEEEtran}

\usepackage{amsmath,amssymb,amsfonts}
\usepackage{amsthm}
\usepackage[linesnumbered,lined,boxed,commentsnumbered,ruled]{algorithm2e}
\usepackage{graphicx}
\allowdisplaybreaks
\usepackage{tikz}
\usepackage{soul}
\usepackage{makecell}
\usepackage{booktabs}
\usepackage{multirow}
\usepackage{colortbl}
\usepackage{pgfplots}
\usepackage{caption}
\usepackage{subcaption}

\usepackage{bm}
\pgfplotsset{compat=newest}
\newcommand{\tabl}[2]{\begin{tabular}{#1} #2 \end{tabular}}

\newtheorem{theorem}{Theorem}

\newtheorem{remark}{Remark}

\newtheorem{lemma}[theorem]{Lemma}
\usepackage{chngcntr}
\newtheorem{assumption}{}

\newcommand{\R}{\mathbb{R}}
\newcommand{\E}{\mathbb{E}}
\newcommand{\tr}{^\mathsf{\scriptscriptstyle T}}
\newcommand{\F}{\mathcal{F}}
\newcommand{\N}{\mathcal{N}}
\newcommand{\K}{\mathcal{K}}

\usepackage{todonotes}
\usepackage{xspace}

\begin{document}
\title{Generalized Simultaneous Perturbation-based Gradient Search with Reduced Estimator Bias}
\author{
	Soumen Pachal$^\dagger$,  Shalabh Bhatnagar$^\ddag$ and Prashanth L.A.$^\star$
	\thanks{ 
		\hspace{-1.5em}$\dagger$ Department of Computer Science and Engineering,
		Indian Institute of Technology Madras, Chennai; TCS Research, India,
		E-Mail: cs22d009@smail.iitm.ac.in,	\\
		$\ddag$ Department of Computer Science and Automation and the Robert Bosch Centre for Cyber Physical Systems,
		Indian Institute of Science, Bangalore,
		E-Mail: shalabh@iisc.ac.in,\\
		$\star$ Department of Computer Science and Engineering,
		Indian Institute of Technology Madras, Chennai,
		E-Mail: prashla@cse.iitm.ac.in.\\	
	}
}

\maketitle
\begin{abstract}
We present in this paper a family of generalized simultaneous perturbation-based gradient search (GSPGS) estimators that use noisy function measurements. The number of function measurements required by each estimator is guided by the desired level of accuracy. We first present in detail unbalanced generalized simultaneous perturbation stochastic approximation (GSPSA) estimators and later present the balanced versions (B-GSPSA) of these. We extend this idea further and present the generalized smoothed functional (GSF) and generalized random directions stochastic approximation (GRDSA) estimators, respectively, as well as their balanced variants. We show that estimators within any specified class requiring more number of function measurements result in lower estimator bias. We present a detailed analysis of both the asymptotic and non-asymptotic convergence of the resulting stochastic approximation schemes. We further present a series of experimental results with the various GSPGS estimators on the Rastrigin and quadratic function objectives. Our experiments are seen to validate our theoretical findings. 
\end{abstract}

\begin{IEEEkeywords}
Stochastic Optimization, Generalized Simultaneous Perturbation-based Gradient Search (GSPGS), Generalized Simultaneous Perturbation Stochastic Approximation (GSPSA), Balanced Generalized SPSA (B-GSPSA), Generalized Smoothed Functional (GSF) Procedure, Generalized Random Directions Stochastic Approximation (GRDSA). 
\end{IEEEkeywords}

\section{Introduction}

This paper deals with the problem of stochastic optimization. Let $f: \R^d\times \R^k \rightarrow \R$ denote a general non-linear and noisy performance function that is a function of a parameter $\theta\in\R^d$ and a random vector $\xi\in\R^k$. The goal is to find a parameter $\theta^* \in \R^d$ that minimizes the objective function $F(\theta) = E_\xi[f(\theta,\xi)]$ over all $\theta \in \R^d$. In other words, we wish to find $\theta^*\in\R^d$ for which
\begin{equation}
\label{F-eq}
F(\theta^*) = \min_{\theta\in\R^d} F(\theta).
\end{equation}

We assume that the analytical form of the function $F$ is not known but noise-corrupted samples $f(\theta,\xi)$ of the performance objective are available. Since finding a global minimum of a function such as $F(\cdot)$ is difficult in general, one often settles for the less-difficult problem of finding a local minimum of such an objective. Infinitesimal perturbation analysis (IPA) \cite{fu2015handbook}, \cite{hocao} is a class of techniques that incorporate stochastic gradient search though under strong requirements on the sample performance objective. They work in situations where the gradient and expectation operators can be interchanged, i.e., that $\nabla F(\theta) = E_\xi[\nabla f(\theta,\xi)]$. This is particularly difficult to achieve when the noise $\xi$ is also a function of the parameter $\theta$. Nonetheless when such conditions hold, IPA methods are efficient and require only a single function measurement at each instant and are seen to converge to local minima.  

In situations where IPA type approaches do not work, one resorts to gradient estimation techniques that are typically zeroth-order methods where the gradient estimators are constructed from noisy sample performance measurements obtained at certain perturbed parameter values. The earliest such approach is due to Kiefer and Wolfowitz \cite{kw} which was originally proposed for the case when the parameter is a scalar and requires two function measurements. When translated to the case of a vector ($d$-dimensional) parameter, this approach estimates the gradient by using $2d$ function measurements, two for each of the $d$ partial derivatives. The gradient estimator in this case has  the form $\hat{\nabla} F(\theta) = (\hat{\nabla}_1F(\theta),\hat{\nabla}_2F(\theta),\ldots,\hat{\nabla}_dF(\theta))^T$, where
\begin{equation}
\label{KWe}
\hat{\nabla}_i F(\theta) = \frac{f(\theta+\delta e_i,\xi_i^+)-f(\theta-\delta e_i,\xi_i^-)}{2\delta},
\end{equation}
$i=1,\ldots,d$. Here $e_i$ is the unit vector having 1 in the $i$th place and 0's elsewhere, while $\xi_i^+,\xi_i^-$ denote the noise random variables arising from the function measurements at parameters $\theta +\delta e_i$ and $\theta - \delta e_i$, respectively. 
Here $\delta>0$ is a small sensitivity parameter and one requires $\delta\rightarrow 0$ slowly enough for convergence to a local minimum. On the other hand, if one is satisfied with convergence to a small ($\epsilon$) neighborhood of a local minimum, then a small enough (though constant) $\delta$ would suffice. The approximate stochastic gradient scheme with noisy gradient estimators is the following:
\begin{equation}
\label{RM}
\theta(n+1) = \theta(n) -a(n) \hat{\nabla}F(\theta(n)),
\end{equation}
starting with some $\theta(0)\in \R^d$. Such an algorithm would fall under the broad category of stochastic approximation algorithms \cite{rm} whose averaging properties help achieve the desired objective of convergence to a local minimum or a small neighborhood of it depending (as alluded to above) on whether $\delta\rightarrow 0$ or is held fixed to a small enough value. 

Algorithm (\ref{RM}) together with the estimator (\ref{KWe}) is also referred to as the finite difference stochastic approximation (FDSA) or the Kiefer-Wolfowitz (K-W) scheme.
One of the key disadvantages of the K-W scheme is the large number ($2d$) of simulations that it requires for a $d$-dimensional parameter. The computational effort required for this scheme is thus significant particularly when $d$ is large.

Over a number of years, it has been observed that approaches based on random (simultaneous) perturbations such as the smoothed functional scheme \cite{katkul}, \cite{rubinstein}, \cite{bhatbor3}, \cite{bhat2}, the random directions stochastic approximation \cite{kushcla}, \cite{prashanth2017rdsa}, \cite{prashanth2020rdsa}, \cite{mondal2022} and the 
simultaneous perturbation stochastic approximation (SPSA) \cite{spall1992multivariate, spall1997one} and it's deterministic perturbation variant \cite{bhatfumarcwang} are more efficient than the K-W procedure. The broad idea in these approaches is to use only a limited number of function measurements (often one or two) to get estimates of all the partial derivatives of the performance objective. This is achieved by perturbing all the component directions most often with distributions that satisfy certain desired properties. For instance, distributions such as the Gaussian \cite{bhatbor3, bhat2}, Cauchy \cite{mondal2022}, uniform \cite{prashanth2017rdsa} and even q-Gaussian \cite{DeBhAd1,DeBhAd2} have been found to work efficiently.

Amongst the most popular of this class of the simultaneous perturbation approaches is SPSA \cite{spall1992multivariate}. It is hugely popular because it requires only two function measurements for any $d$-dimensional parameter. Here one perturbs all the parameter component directions using random variates that require properties that are most commonly satisfied by independent, symmetric, zero-mean Bernoulli distributed random variables. The two-measurenent SPSA gradient estimate proposed in \cite{spall1992multivariate} has been found to be effective and takes the following form:
\begin{equation}
\label{SPSAe}
\hat{\nabla}_i F(\theta) = \frac{f(\theta+\delta \Delta)-f(\theta-\delta \Delta)}{2\delta\Delta_i},
\end{equation}
where $\Delta = (\Delta_1,\ldots,\Delta_d)^T$, with $\Delta_i$ being independent of $\Delta_j$ for all $i\not=j$ and with $\Delta_i =\pm 1$ with probability $1/2$, $\forall i=1,\ldots,d$. It can be shown using suitable Taylor's expansions, see \cite{spall1992multivariate}, that the conditional expectation of $\hat{\nabla}_i F(\theta)$ given $\theta$ in (\ref{SPSAe}) gives the $i$th partial derivative $\nabla_i F(\theta)$ plus a bias term that is of order $o(\delta)$. Thus, when this gradient estimator is used in a stochastic approximation update as in (\ref{RM}), one obtains from the averaging properties of stochastic approximation, convergence to a local minimum of $F$ provided $\delta\rightarrow 0$ slowly enough.

The goal of this paper is to generalize the simultaneous perturbation based gradient search procedures as above by giving a family of these where the number of function measurements required in the estimator is guided by the desired level of accuracy. For better exposition, we focus for the large part on the generalization of the gradient SPSA procedure comprising of unbalanced estimators and later present similar generalizations for balanced SPSA estimators as well as other procedures such as generalized smoothed functional (GSF) and generalized random directions stochastic approximation (GRDSA) respectively. 

Our starting point in this paper is Chapter VII.1a of \cite{asmussen} where the case of scalar parameter $\theta$ is considered and it is shown using suitable Taylor's expansions that for a bias in the estimator of order $\delta^k$, one can obtain a suitable simulation-based estimator to the gradient, for any $k>1$. This idea has also been explored under a scalar parameterization in \cite{fu2020differentiation}.
We extend this idea to the case of vector parameters by incorporating the simultaneous perturbation idea and in the process obtain a family of gradient SPSA estimators that involve an increasing number of function measurements depending on the desired level of accuracy. 

We consider the case of $\delta\rightarrow 0$ for our asymptotic analysis as with many other previous works, cf.~\cite{kw, spall1992multivariate, prashanth2017rdsa} where the algorithm is shown to converge to the stable equilibria of an ordinary differential equation (ODE). An alternative that we didn't pursue is the case when $\delta>0$ is a constant. In this case, one may show as in \cite{arun-bhatnagar2}, that the algorithm tracks the attractors of a differential inclusion. The precise requirement on $\delta$ is captured in Assumption (A4) of Section~\ref{convergence}. We also provide a non-asymptotic convergence result in addition to asymptotic convergence.

The rest of the paper is organized as follows: In Section~\ref{GSPSA}, we present the Generalized SPSA gradient operator that is seen to result in the Generalized SPSA gradient estimators involving  function measurements with various perturbed parameters. In Section~\ref{123estimators}, we show the precise form of the Generalized (unbalanced) SPSA estimators for the first few cases as well as a $(k_1+1)$ measurements estimator, $k_1 \geq 1$, that we show through suitable Taylor's expansions, provides an estimator bias of order $O(\delta^{k_1})$. 

In Section \ref{sec: balanced_estimator12}, we present the Balanced Generalized SPSA estimators for the first few cases as well as a general such estimator. In Section~\ref{convergence}, we provide the main theoretical results associated with the proposed scheme. In Section \ref{sec:unified_estimator}, we focus on the generalization to the other simultaneous perturbation-based estimators. The proofs of these main results are then presented in Section~\ref{proofs}. 

In Section \ref{sec:experiments}, we present the results of several simulation experiments which are seen to validate the theoretical findings. In particular, we observe that for a given simulation budget, estimators that require more number of function measurements within a prescribed class, in general, result in better error performance. Finally, in Section~\ref{conclusions}, we present the conclusions of this study and also present some directions for further research. 

A preliminary version of this paper without the balanced generalized SPSA estimators as well as generalizations of the other gradient estimators such as smoothed functional (SF), random directions stochastic approximation (RDSA) etc.,
was published in \cite{bhatnagar2023generalized}.  Moreover, the convergence analysis in \cite{bhatnagar2023generalized} was performed for Generalized SPSA with $k_1=3$. In contrast, we provide a generalized convergence proof that applies for any $k_1\ge 1$, as well other generalized estimators such as SF, RDSA, etc. In addition, we establish convergence guarantees for the Balanced Generalized SPSA algorithm, which in particular shows a better convergence rate. Finally, unlike \cite{bhatnagar2023generalized}, we numerically validate our proposed algorithms.

\section{Generalized SPSA-Based Gradient Operator}
\label{GSPSA}
The key idea here is to construct finite difference estimators of $\nabla F(\theta)$ for any given order of the bias. In Chapter VII.1a of \cite{asmussen}, this idea has been explored in the context of scalar functions $F:\mathbb{R}\rightarrow\mathbb{R}$. A regular Taylor's expansion in terms of the differentiation operator is performed in order to obtain generalized finite difference estimates of the derivative of $F$. 
We extend this idea to the case of vector-valued parameters $\theta\in\mathbb{R}^d$ and for functions $F:\mathbb{R}^d\rightarrow\mathbb{R}$ by invoking the multi-variate version of the Taylor series expansion. In particular, as with SPSA, we select the directional perturbation vector in the Taylor's expansion as the vector $\Delta=(\Delta_1,\Delta_2,\ldots,\Delta_d)^T$ of independent and symmetric Bernoulli distributed random variables $\Delta_i = \pm 1$ with probability $1/2$.

Define the differentiation operator ${\cal D}^\beta$ as
\[{\displaystyle {\cal D}^\beta F(\theta) \triangleq \frac{\partial^{|\beta|}F(\theta)}{\partial\theta_1^{\beta_1}\cdots \partial\theta_d^{\beta_d}}},\]
with $|\beta|=\beta_1+\cdots+\beta_d$ and $\theta^\beta=\theta_1^{\beta_1}\cdots\theta_d^{\beta_d}$, for $\beta_1,\cdots,\beta_d \geq 0$. Further, $\beta! =\beta_1!\beta_2!\cdots\beta_d!$.
The multi-variate Taylor's expansion has the following form: 
\begin{equation}
\label{mte}
F(\theta+\delta\Delta) = \sum_{|\beta|=0}^{\infty} \frac{{\cal D}^\beta F(\theta)}{\beta!}(\delta\Delta)^\beta
= \sum_{|\beta|=0}^{\infty} \left(\frac{(\delta\Delta{\cal D})^\beta}{\beta!}\right)F(\theta),
\end{equation}
assuming that $F$ is infinitely many times continuously differentiable. One may now define a shift operator $\tau_{\delta\Delta}$ as follows:
$\tau_{\delta\Delta}F(\theta) \equiv F(\theta+\delta\Delta)$. This allows us to rewrite \eqref{mte} as 
\[
\tau_{\delta\Delta} = \exp(\delta\Delta{\cal D}),
\]
which implies
\[
{\cal D} = \frac{1}{\delta\Delta}\log(\tau_{\delta\Delta}),
\]
where 
${\displaystyle 
\frac{1}{\delta\Delta} \triangleq \left(\frac{1}{\delta\Delta_1},\ldots,\frac{1}{\delta\Delta_d}\right)^T.
}$
An expansion of the $\log$ function then would give
\[
{\cal D} = \frac{1}{\delta\Delta}
\sum_{j=1}^{\infty}\frac{(\tau_{\delta\Delta} - {\cal I})^j}{j}(-1)^{j+1},\]
where ${\cal I}$ denotes the identity operator, and 
$\tau_{\delta\Delta}^{k} = \tau_{k \delta\Delta}$, for any $k$. 
We can view our generalized gradient operator as follows: Let ${\cal D} = ({\cal D}_i, i=1,\ldots,d)^T$ where for $i=1,\ldots,d$,
\begin{equation} 
\label{gestimator}
{\cal D}_i = \frac{1}{\delta\Delta_i} \sum_{j=1}^{\infty}\frac{(\tau_{\delta\Delta} - {\cal I})^j}{j}(-1)^{j+1}.
\end{equation}
The above gradient operator ${\cal D}$ is an exact (or ideal) operator that will result in no bias in the gradient estimation procedure. 
In the next section, we propose to truncate \eqref{gestimator}
by only taking say the first $k_1$ terms in the summation for some $k_1\geq 1$ while ignoring the remaining terms. We show that this results in powerful SPSA-based gradient estimators that require $(k_1+1)$ function measurements and provide a bias of $O(\delta^{k_1})$. Thus, for a small enough  $\delta$, by increasing $k_1$ and thereby also increasing the number of function measurements as required by the resulting gradient estimator, we show using Taylor's expansions that one may significantly reduce the estimator bias. 

\section{Generalized (Unbalanced) Gradient SPSA Estimators}
\label{123estimators}
We present in this section a string of GSPSA algorithms that are obtained by truncating the series in (\ref{gestimator}) and also provide a result on the bias in these algorithms. As discussed towards the end of the previous section, from the form of the gradient operator in (\ref{gestimator}), one can obtain a $(k_{1}+1)$ measurements estimator by truncating the series above at $k_{1}$ (i.e., by only taking the sum of the first $k_1$ terms in the summation in \eqref{gestimator}). 

We present below the form of the GSPSA estimators based on the first few measurements and briefly analyze these for improved clarity. For ease of exposition, we ignore the noise in these estimators that involve two, three, four and five measurements, respectively, of the performance objective. However, we do account for noise in the generalized $(k_1+1)$ measurements estimator, which is presented in \eqref{eq:gspsa-est} after the illustrative cases corresponding to $k_1=1,\ldots,4$. 

\subsection{Two measurements GSPSA}
The two measurements version of SPSA when using the GSPSA estimator (\ref{gestimator}) will correspond to truncating the series there at $j=1$ (i.e., only considering the first term). Then, we will have
\[
{\cal D}^1_iF(\theta) = \left(\frac{\tau_{\delta\Delta}-{\cal I}}{\delta\Delta_i} \right) F(\theta)
= \frac{F(\theta+\delta\Delta)-F(\theta)}{\delta\Delta_i},
\]
where ${\cal D}^1 \triangleq ({\cal D}^1_i, i=1,\ldots,d)^T$ denotes the first order approximation operator. 
This is the one-sided version of SPSA, and has been analyzed in \cite{chen1999kiefer} for its convergence properties. In particular, performing a Taylor's expansion of $F(\theta+\delta\Delta)$ in the above, one obtains that
\begin{equation}
\label{1spsa}
{\cal D}^1_iF(\theta)=\frac{F(\theta+\delta\Delta)-F(\theta)}{\delta\Delta_i}
= \frac{\Delta^T\nabla F(\theta)}{\Delta_i} + O(\delta).
\end{equation}

\subsection{Three measurements GSPSA}

This estimator will be obtained from (\ref{gestimator}) by truncating the series there at $j=2$. Thus, we have in this case
\begin{align*}
{\cal D}^2_iF(\theta) &= \left[ \left(\frac{\tau_{\delta\Delta}-{\cal I}}{\delta\Delta_i}\right) - \frac{(\tau_{\delta\Delta}-{\cal I})^2}{2\delta\Delta_i}\right]F(\theta)\\
&= \left[\left(\frac{\tau_{\delta\Delta}-{\cal I}}{\delta\Delta_i}\right)
- \left( \frac{\tau_{2\delta\Delta}+{\cal I} -2\tau_{\delta\Delta}}{2\delta\Delta_i}
\right)\right]F(\theta) \\
&= \left(\frac{F(\theta+\delta\Delta)-F(\theta)}{\delta\Delta_i}\right) \\
& \quad- \left(\frac{F(\theta+2\delta\Delta) + F(\theta) - 2F(\theta+\delta\Delta)}{2\delta\Delta_i}
\right)\\
&= \left(\frac{4F(\theta+\delta\Delta)-3F(\theta)-F(\theta+2\delta\Delta)}{2\delta\Delta_i}\right).
\end{align*}
As before, ${\cal D}^2$ indicates the second order approximation operator.
Like the estimators that follow, this is a new gradient SPSA estimator that has previously not been presented. Upon performing a Taylor's expansion of the perturbed quantities in the final expression in the above equation, we obtain 
\[
F(\theta+\delta\Delta) = F(\theta) + \delta \Delta^T \nabla F(\theta) +\frac{\delta^2}{2}\Delta^T\nabla^2 F(\theta)\Delta + O(\delta^3).
\]
Similarly,
\[
F(\theta+2\delta\Delta) \!=\! F(\theta) +2\delta \Delta^T \nabla F(\theta) + \frac{4\delta^2\Delta^T\nabla^2F(\theta)\Delta}{2} + O(\delta^3).
\]
Thus, upon simplification, one obtains
\[{\cal D}^2_iF(\theta) = 
\left(\frac{4F(\theta+\delta\Delta)-3F(\theta)-F(\theta+2\delta\Delta)}{2\delta\Delta_i}\right)\]
\begin{equation}
\label{2spsa}
= \frac{\Delta^T \nabla F(\theta)}{\Delta_i} + O(\delta^2).
\end{equation}
The first term in the expansion in (\ref{2spsa}) is the same as the first term in the RHS of  (\ref{1spsa}). However, the second term in (\ref{2spsa}) is $O(\delta^2)$ as opposed to $O(\delta)$ in (\ref{1spsa}). It is interesting however to note here that the balanced two-simulation SPSA estimator of \cite{spall1992multivariate}, see (\ref{SPSAe}), has a similar Taylor's expansion as (\ref{2spsa}).

\subsection{Four measurements GSPSA}
The estimator here is obtained from (\ref{gestimator}) by truncating the series at $j=3$. Thus, we have
\begin{align*}
&{\cal D}^3_iF(\theta) \\
&= \left[ \left(\frac{\tau_{\delta\Delta}-{\cal I}}{\delta\Delta_i}\right) - \frac{(\tau_{\delta\Delta}-{\cal I})^2}{2\delta\Delta_i}
+ \frac{(\tau_{\delta\Delta}-{\cal I})^3}{3\delta\Delta_i}
\right]F(\theta)\\
&= \Bigg[\left(\frac{\tau_{\delta\Delta}-{\cal I}}{\delta\Delta_i}\right)
- \left( \frac{\tau_{2\delta\Delta}+{\cal I} -2\tau_{\delta\Delta}}{2\delta\Delta_i}
\right)\\
&\qquad+ \left(\frac{\tau_{3\delta\Delta} -3\tau_{2\delta\Delta} + 3\tau_{\delta\Delta} -{\cal I}}{3\delta\Delta_i}
\right)\Bigg]F(\theta)\\
&= \frac{2F(\theta+3\delta\Delta) - 9 F(\theta+2\delta\Delta)+ 18F(\theta+\delta\Delta) -11F(\theta)}{6\delta\Delta_i}.
\end{align*}
The last equality above is obtained upon simplification. Now Taylor's expansions as before give us 

\begin{equation} 
\label{3spsa}
{\cal D}^3_iF(\theta) = \frac{\Delta^T \nabla F(\theta)}{\Delta_i}  + O(\delta^3).
\end{equation}
Note here that the zeroth order as well as the second and third order terms turn out to be exactly equal to zero due to cancellations of the various terms in the expansions resulting in \eqref{3spsa}.  

\subsection{Five measurements GSPSA}
\label{sec:4spsa}
Here the estimator is obtained from (\ref{gestimator}) by truncating the series at $j=4$. Thus, the expression for ${\cal D}^4_i F(\theta)$ will be the following:
\begin{align*}
&{\cal D}^4_iF(\theta)\\
&= \Bigg[\frac{(\tau_{\delta\Delta}-{\cal I})}{\delta\Delta_i} - \frac{(\tau_{\delta\Delta}-{\cal I})^2}{2\delta\Delta_i}
+ \frac{(\tau_{\delta\Delta}-{\cal I})^3}{3\delta\Delta_i}\\
&\qquad - \frac{(\tau_{\delta\Delta}-{\cal I})^4}{4\delta\Delta_i}\Bigg]F(\theta)\\
&= \Big[\left(\frac{\tau_{\delta\Delta}-{\cal I}}{\delta\Delta_i}\right)
- \left( \frac{\tau_{2\delta\Delta}+{\cal I} -2\tau_{\delta\Delta}}{2\delta\Delta_i}
\right) \\
&\qquad+ \left(\frac{\tau_{3\delta\Delta} -3\tau_{2\delta\Delta} + 3\tau_{\delta\Delta} -{\cal I}}{3\delta\Delta_i}
\right)\\
&\qquad- \left(\frac{\tau_{4\delta\Delta} +6\tau_{2\delta\Delta} -4\tau_{3\delta\Delta} -4\tau_{\delta\Delta} +{\cal I}}{4\delta\Delta_i}
\right)\Big]F(\theta)\\
&= \frac{-3F(\theta+4\delta\Delta) + 16F(\theta+3\delta\Delta) -36F(\theta+2\delta\Delta) }{12\delta\Delta_i}\\
&\qquad \qquad + \frac{48 F(\theta+\delta\Delta)-25F(\theta)}{12\delta\Delta_i}.   
\end{align*}
Taylor's expansions as before can be seen to give us
\begin{equation} 
\label{4spsa}
{\cal D}^4_iF(\theta) = \frac{\Delta^T \nabla F(\theta)}{\Delta_i} + O(\delta^4).
\end{equation}
Note here again that the zeroth order as well as the second, third and fourth order terms turn out to be exactly zero due to suitable cancellations of the various terms in the Taylor's expansions thereby resulting in the form \eqref{4spsa} of the gradient estimator above. 

\subsection{$(k_1+1)$ measurements GSPSA }
\label{sec:kspsa}
Here the estimator is obtained from (\ref{gestimator}) by truncating the series for a general $k_1\geq 1$. Thus, the expression for the operator ${\cal D}^{k_1}_i$ will be the following:
\begin{align*}
{\cal D}^{k_1}_i &= \frac{1}{\delta\Delta_i} \sum_{j=1}^{k_1}\frac{(\tau_{\delta\Delta} - {\cal I})^j}{j}(-1)^{j+1}\\
& = \frac{1}{\delta\Delta_i} \sum_{j=1}^{k_1} \frac{(-1)^{j+1} }{j} \sum_{l=0}^j {j \choose l} (\tau_{\delta\Delta} )^l (-1)^{j-l}\\
& = \frac{1}{\delta\Delta_i} \sum_{l=0}^{k_1} \frac{(-1)^{1-l} (\tau_{\delta\Delta})^l}{l!} \sum_{j=l}^{k_1} \frac{(j-1)!}{(j-l)!}\\
& = \frac{1}{\delta\Delta_i} \sum_{l=0}^{k_1} \frac{(-1)^{1-l} (\tau_{\delta\Delta})^l}{l!} C^{k_1}_l,
\end{align*}

where $C^{k_1}_l=
\begin{cases}
\frac{1}{l}\prod\limits_{j=0}^{l-1} (k_1-j),& l \geq 1, \\
\sum_{j = 1}^{k_1} \frac{1}{j}, & l = 0.
\end{cases}
$

Thus, we have
\begin{equation}
\label{eq:gspsa_noiseless_est}
    \begin{aligned}
{\cal D}^{k_1}_iF(\theta) 
& = \left[\frac{1}{\delta\Delta_i} \sum_{l=0}^{k_1} \frac{(-1)^{1-l} C^{k_1}_l\tau_{l\delta\Delta}}{l!}\right] F(\theta) \\
&= \frac{1}{\delta\Delta_i} \sum_{l=0}^{k_1} \frac{(-1)^{1-l} C^{k_1}_l F(\theta+l\delta\Delta)}{l!}.
    \end{aligned}
\end{equation}
In the general case with noisy observations, the generalized SPSA estimator will have the following form:
\begin{align}
&
\widehat{\cal D}^{k_1}_iF(\theta(n))\triangleq\nonumber\\ 
& \frac{1}{\delta(n)\Delta_i(n)} \sum_{l=0}^{k_1} \frac{(-1)^{1-l} C^{k_1}_l \{ f(\theta(n)+l\delta(n)\Delta(n), \xi_l(n)) \}}{l!},\label{eq:gspsa-est}
\end{align}
 where $\{\xi_l(n)\}_{l = 0}^{k_1}$ is an independent and identically distributed (i.i.d) noise sequence as the noise in the estimator measurements. Note that one may write
 \begin{align}
     &f(\theta(n) + l \delta(n)\Delta(n), \xi_l(n)) \nonumber\\
     &\quad= F(\theta(n) + l \delta(n)\Delta(n)) + M_{n+1}^l, \nonumber
 \end{align}
 where $M_{n+1}^l = f(\theta(n) + l \delta(n)\Delta(n), \xi_l(n)) - F(\theta(n) + l \delta(n)\Delta(n))$ and $F(\theta(n) + l \delta(n)\Delta(n)) = \E \left[f(\theta(n) + l \delta(n)\Delta(n), \xi_l(n))| \theta(n)\right],$ where the expectation is w.r.t the common distribution of $\xi_l(n)$. Note that with $\mathcal{F}_n^l = \sigma(\theta(m), \Delta(m), m\leq n, \xi_l(m), m < n), n\geq 0, (M_n^l, \mathcal{F}_n^l), n\geq 0$ is a martingale difference sequence. 

\subsection{The $(k_1+1)$-Measurement GSPSA Algorithm}
Algorithm \ref{alg:GSPSA} presents the pseudo-code for a general $(k_1+1)$-measurement GSPSA algorithm. The precise form of the estimator will then depend on the specific value of $k_1$ being used. 
This is a stochastic gradient algorithm, which employs the GSPSA estimator, defined in \eqref{eq:gspsa-est}. 
In this algorithm, 
$\Delta(n), n\geq 0$ is a sequence of independent  vectors of symmetric, independent, $\pm 1$-valued Bernoulli random variables, while $\delta(n), n\geq 0$ is a time-dependent perturbation sequence that is assumed to be asymptotically vanishing and satisfies \ref{ass:step_size} below. 

\begin{algorithm}
\SetKwInOut{Input}{Input}\SetKwInOut{Output}{Output}
\Input{initial point $\theta(0)$, 
sensitivity parameters $\{\delta(n)\}$, step sizes $\{a(n)\}$, measurements $(k_1+1), k_1\geq 1$, \# iterations $T$.}
\For{$n\leftarrow 0$ \KwTo $T-1$}{
\tcc{Random perturbation}
Generate $\Delta(n)$ using symmetric $\pm1$-valued Bernoulli distribution;

\tcc{Function measurements}
Obtain $\{f(\theta(n) + l \delta(n)\Delta(n),\xi_l(n))\}$, for $l = 0,\ldots,k_1$;

\tcc{Gradient estimation}
Form $\widehat{\cal D}^{k_1}_iF(\theta(n))$ as follows:

$\widehat{\cal D}^{k_1}_iF(\theta(n)) =
 \frac{\Delta_i(n)}{\delta(n)} \sum_{l=0}^{k_1} \frac{(-1)^{1-l} C_l^{k_1} f(\theta(n)+l\delta(n)\Delta(n),\xi_l(n))}{l!}$;

\tcc{Gradient descent}
Perform the following update iteration:
\begin{align}
    &\theta_i(n+1) = \theta_i(n) - a(n) \widehat{\cal D}^{k_1}_iF(\theta(n)).\label{eq:gspsa-gd-update}
\end{align}
 
  }
 \Output{Parameter $\theta(T)$} 
\caption{The GSPSA Algorithm}
\label{alg:GSPSA}
\end{algorithm}

We analyze the asymptotic as well as non-asymptotic performance of a stochastic gradient algorithm using the gradient estimator \eqref{eq:gspsa-est} in Section \ref{convergence}. A crucial ingredient for these analyses is the bias in the gradient estimator, for which we give a bound in the next section.

\subsection{Bias in the GSPSA gradient estimator}
For the analysis of the bias in the general $(k_1+1)$-measurement GSPSA gradient estimator \eqref{eq:gspsa-est},  we make the following assumptions: 

\vspace*{6pt}
\begin{assumption}
    \label{ass:derivative-bound}
    $F: \mathbb{R}^d \rightarrow \mathbb{R}$ is $k_1$-times continuously differentiable with a bounded $(k_1 + 1)$-th derivative.
\end{assumption}

\vspace*{4pt}
\begin{assumption}
\label{ass: del iid}
    $\Delta_i(n), i=1,\ldots,d$ are i.~i.~d.~ random variables that are independent of $\F_n=\sigma(\theta(j), j\leq n, \Delta
    (j),j<n,\xi_0(n),\ldots, \xi_{k_1}(n)), n\geq 1$.
\end{assumption}
\vspace*{4pt}

\begin{assumption}
\label{ass:f bound}
    For some $\alpha_1, \alpha_2 >0$ and for $n\ge 1$, 
 $\E \left[F(\theta(n)+ l\delta(n) \Delta(n))^{2}|\theta(n)\right] \le \alpha_1$ and   $\E[\xi_l(n)^2] \leq \alpha_2$, for  $l=0,\ldots,k_1$. Here the expectation is w.r.t the distribution of $\Delta(n)$. 
\end{assumption}

\vspace*{6pt}
The result below shows that the bias of the $(k_1+1)$ measurements estimator \eqref{eq:gspsa-est} is $O\left(\delta^{k_1}\right)$.

\vspace*{6pt}
\begin{lemma}[Bias lemma]
\label{lemma:gspsa-bias}
\ \\Under \ref{ass:derivative-bound} - \ref{ass:f bound}, for $\widehat{\cal D}^{k_1}_iF(\theta(n)) $ defined according to \eqref{eq:gspsa-est}
we have almost surely (a.s.) that
\begin{align}
& \left| \E\left[\left.\widehat{\cal D}^{k_1}_i F(\theta(n)) \right| \mathcal{F}_n \right] - \nabla_i F(\theta(n))\right| \le c_1\delta(n)^{k_1},\,\, \text{and} \nonumber\\
& \E \left [\left \|\widehat{\mathcal{D}}^{k_1}F(\theta(n)) - \E \left[ \widehat{\mathcal{D}}^{k_1} F(\theta(n))\right]\right \|^2\right ] \!\leq \!\frac{c_2}{\delta(n)^2},  \nonumber
\end{align} 
for $i=1,\ldots,d$, where $c_1$ and $c_2$ are dimension-dependent constants.
\end{lemma}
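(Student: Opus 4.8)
The plan is to prove the two bounds separately, treating the bias estimate first since it carries the essential content, and the variance estimate by a cruder moment count. For the bias I would build directly on the truncated operator \eqref{eq:gspsa_noiseless_est}. The first step is to remove the measurement noise: writing $f(\theta(n)+l\delta(n)\Delta(n),\xi_l(n)) = F(\theta(n)+l\delta(n)\Delta(n)) + M_{n+1}^l$ and using that $(M_{n+1}^l,\mathcal{F}_n^l)$ is a martingale difference sequence with $\E[f(\theta',\xi_l(n))]=F(\theta')$ for fixed $\theta'$, the noise contributes nothing once the expectation is taken. It then suffices to control $\E[{\cal D}^{k_1}_iF(\theta(n))\mid\mathcal{F}_n]$, where ${\cal D}^{k_1}_iF$ is the noiseless operator of \eqref{eq:gspsa_noiseless_est} evaluated at the realized perturbation $\Delta(n)$.

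The central step is a finite Taylor expansion with remainder. Under Assumption~\ref{ass:derivative-bound} I would expand each $F(\theta+l\delta\Delta)$ about $\theta$ to order $k_1$, giving $F(\theta+l\delta\Delta)=\sum_{m=0}^{k_1}\frac{(l\delta)^m}{m!}(\Delta^T\nabla)^mF(\theta)+R_l$ with $|R_l|=O(\delta^{k_1+1})$, since the $(k_1+1)$-th derivative is bounded and $\|\Delta\|=\sqrt d$. Substituting into \eqref{eq:gspsa_noiseless_est} and grouping by powers of $\delta$ reduces everything to the scalars $S_m=\sum_{l=0}^{k_1}\frac{(-1)^{1-l}C_l^{k_1}}{l!}\,l^m$. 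The whole construction rests on the identities $S_0=0$, $S_1=1$, and $S_m=0$ for $2\le m\le k_1$; these encode precisely the fact that truncating the logarithm series of $\tau_{\delta\Delta}=\exp(\delta\Delta^T\nabla)$ at $k_1$ terms reproduces $\delta\Delta^T\nabla$ up to $O(\delta^{k_1+1})$, and I would verify them either by matching operator coefficients or directly from the closed form of $C_l^{k_1}$. Granting them, only the order-one term survives, yielding $\Delta^T\nabla F(\theta)/\Delta_i$, while the weighted remainder contributes $O(\delta^{k_1})$ after division by $\delta$.

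The last step for the bias is to average over $\Delta(n)$, which is legitimate because $\Delta(n)$ is independent of $\mathcal{F}_n$ (Assumption~\ref{ass: del iid}) and $\theta(n)$ is $\mathcal{F}_n$-measurable. The symmetric Bernoulli structure gives $\E[\Delta_j/\Delta_i]=1$ if $j=i$ and $0$ otherwise, so $\E_\Delta[\Delta^T\nabla F(\theta)/\Delta_i]=\nabla_iF(\theta)$ exactly; combined with the $O(\delta^{k_1})$ remainder this yields the first claim, with a constant $c_1$ aggregating the derivative bound, the weights $C_l^{k_1}/l!$, and $d$. For the variance bound I would exploit that the numerator $G(n)=\sum_{l=0}^{k_1}\frac{(-1)^{1-l}C_l^{k_1}}{l!}f(\theta(n)+l\delta(n)\Delta(n),\xi_l(n))$ is common to all coordinates and $\Delta_i(n)^2=1$, so $\|\widehat{\cal D}^{k_1}F(\theta(n))\|^2=d\,\delta(n)^{-2}G(n)^2$; since Assumption~\ref{ass:f bound} bounds $\E[G(n)^2]$ through the second moments of the function values and the noise, and a centered variable has variance at most its second moment, the bound $c_2/\delta(n)^2$ follows with $c_2$ absorbing $d$, the weights and $\alpha_1,\alpha_2$.

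I expect the main obstacle to be the combinatorial cancellation $S_m=0$ for $2\le m\le k_1$: this is the only place where the specific truncated-logarithm weights matter rather than arbitrary finite-difference weights, and making the formal $\exp$/$\log$ operator calculus rigorous at finite smoothness requires a clean derivation of these identities from the closed form of $C_l^{k_1}$. A secondary point of care is the bookkeeping between the filtrations $\mathcal{F}_n$ and $\mathcal{F}_n^l$ when discharging the noise, which must be arranged so that the martingale-difference property is invoked under the correct conditioning before the average over $\Delta(n)$ is taken.
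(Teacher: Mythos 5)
Your proposal is correct and follows essentially the same route as the paper: the paper proves this lemma as the Bernoulli specialization ($U=V=\Delta$) of its unified Lemma~\ref{lemma:bias_uni}, whose proof (via Lemma~\ref{lemma:uni_gspsa}) is exactly your argument --- finite Taylor expansion, the combinatorial cancellations you call $S_0=0$, $S_1=1$, $S_m=0$ for $2\le m\le k_1$ (the paper's identities \eqref{eq:ident1}--\eqref{eq:ident3}, using $C_l^{k_1}/l!=\tfrac{1}{l}\binom{k_1}{l}$ for $l\ge 1$), noise removal through the conditional expectation, and the crude second-moment bound $\E\|\widehat{\cal D}^{k_1}F-\E[\widehat{\cal D}^{k_1}F]\|^2\le\E\|\widehat{\cal D}^{k_1}F\|^2\le C_2/\delta(n)^2$. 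The identity you flag as the main obstacle is precisely the standard finite-difference fact $\sum_{j=0}^{k}(-1)^{k-j}\binom{k}{j}j^q=0$ for $0<q<k$, which the paper likewise invokes without further ado.
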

\begin{proof}
See Section \ref{sec:proof_gspsa}. 
\end{proof}

A few remarks are in order.
\begin{remark}
From the result above, it is apparent that GSPSA has lower bias as compared to one-sided SPSA, see \cite[Section 5.3]{bhatnagar-book}. However, this low bias comes at the cost of extra function measurements. In practical applications where extra function measurements are not expensive, GSPSA is appealing since the lower bias ensures that the rate of convergence is better than one-sided SPSA (see the non-asymptotic analysis in Section \ref{convergence}). An example of such a setting is off-policy reinforcement learning, where the function measurements are obtained using a single dataset, cf. \cite[Chapter 11]{sutton2018reinforcement},\cite{vijayan2021smoothed}.
\end{remark}
\begin{remark}
It is also important to note that Lemma \ref{lemma:gspsa-bias} requires Assumption \ref{ass:derivative-bound}, namely that the function $F$ is $k_1$-times continuously differentiable. This is however required only to obtain the strict bias bound of $O(\delta(n)^{k_1})$ in Lemma~\ref{lemma:gspsa-bias}. The entire family of GSPSA estimators can however continue to be applied even if the objective function $F$ is `less smooth', for instance, if it is only twice continuously differentiable with a bounded third derivative as is the case with one-sided SPSA, i.e., employing function measurements at $\theta(n) + \delta(n)\Delta(n)$ and $\theta(n)$. 
In such a case, the second order terms from the Taylor's expansion of higher order GSPSA estimators would continue to get cancelled giving an estimator bias of $O(\delta(n))$ like SPSA. On the other hand, note  that if the function $F$ satisfies \ref{ass:derivative-bound}, the aforementioned one-sided SPSA estimator  would still continue to give a bias of $O(\delta(n))$ and not $O(\delta(n)^{k_1})$ that the GSPSA estimator would provide. 
To summarize, GSPSA algorithms with $(k_1+1)$ function measurements where the function $F$ is $l$-times continuously differentiable with a bounded $(l+1)$st derivative, for $1\leq l <k_1$, would result in a bias bound of $O(\delta(n)^{l})$, which is clearly superior to one-sided SPSA for $l \ge 2$.
\end{remark}
\begin{remark}
Similar remarks as above also hold for the other GSPGS estimators (B-GSPSA, GSF, GRDSA) that we present in Sections \ref{sec: balanced_estimator12} and \ref{sec:unified_estimator} below.
\end{remark}

\section{Balanced GSPSA Estimators} 
\label{sec: balanced_estimator12}
In the case of vanilla SPSA, the regular balanced estimator of \cite{spall1992multivariate} has a lower bias of $O(\delta^2)$ as compared to the one-sided variant that has a bias bound of $O(\delta)$. This motivates the balanced extension of the GSPSA estimator \eqref{eq:gspsa-est}, which we present in this section. 
As in the case of unbalanced GSPSA, we employ suitable Taylor series expansions  in terms of the differentiation operator. However, instead of the exponential function earlier, we make use of the hyperbolic sine function to arrive at the balanced GSPSA (B-GSPSA) estimator. 

As before, let $\tau_{\delta \Delta} F(\theta) \equiv F(\theta +\delta \Delta),$ with $\tau_{\delta \Delta}$ denoting the shift operator. 
Now note the following: 
\begin{align*}
    \tau_{\delta \Delta} - \tau_{-\delta \Delta} &= \exp(\delta \Delta \mathcal{D}) - \exp(-\delta \Delta \mathcal{D})\\ 
    &= 2 \sinh({\delta \Delta \mathcal{D}}),
\end{align*}
which implies  
\begin{equation}
    \mathcal{D} = \frac{1}{\delta \Delta} \sinh^{-1} \left (\frac{\tau_{\delta \Delta} - \tau_{-\delta \Delta}}{2} \right ). \nonumber
\end{equation}
By using Maclaurin series expansion of $\sinh^{-1}$, we get
\begin{equation}
    \sinh^{-1}(x) = x + \sum_{j =1}^{\infty} (-1)^{j} \frac{1.3\ldots (2j-1)}{2.4 \ldots 2j} \frac{1}{2j+1} x^{2j+1}.
    \label{eq:sinhinv}
\end{equation} 
Using \eqref{eq:sinhinv}, we obtain
\begin{align}
\label{eq:eq_balanced}
&
     \mathcal{D} 
     =
    \frac{1}{\delta \Delta}\left [\left (\frac{\tau_{\delta \Delta}-\tau_{-\delta \Delta}}{2}\right )\right] \nonumber\\ 
    &+ \frac{1}{\delta\Delta}\left [\sum_{j=1}^{\infty} \frac{(-1)^j (2j)!}{2^{2j}(j!)^2} \frac{1}{2j+1} \left ( \frac{\tau_{\delta \Delta} - \tau_{-\delta \Delta}}{2}\right)^{2j+1}\right].
\end{align}
Using the expansion of the gradient operator above, one can obtain any even order approximation by truncating the Taylor series appropriately. A general B-GSPSA estimator will involve $2k_2$ function measurements for some $k_2\geq 1$. We present a couple of illustrative cases below followed by the general form of the $2k_2$ measurement estimator for any $k_2\geq 1$.

\subsection{Two-measurement B-GSPSA}
The two-measurements version of B-GSPSA is obtained by considering only the first term in \eqref{eq:eq_balanced}, i.e.,
\begin{align}
    \mathcal{D}_i^1 F(\theta) &= \frac{1}{\delta \Delta_i} \left[\frac{\tau_{\delta \Delta}-\tau_{-\delta \Delta}}{2}\right ]F(\theta) \nonumber\\ \nonumber
    &= \frac{1}{\delta \Delta_i}\left[\frac{F(\theta + \delta\Delta) - F(\theta -\delta \Delta)}{2}\right].
\end{align}
The expression above coincides with the classic (two-sided) SPSA \cite{spall1992multivariate}. 
By employing Taylor series expansions of $F(\theta \pm \delta \Delta)$, it is easy to see that
\begin{align}
    \mathcal{D}_i^1F(\theta) &= \frac{1}{\delta \Delta_i}\left[\frac{2\delta\Delta^T\nabla F(\theta)}{2}\right] + O(\delta^2) \nonumber\\
    \nonumber &= \frac{\Delta^T \nabla F(\theta)}{\Delta_i} + O(\delta^2).
\end{align}
Note here that the bias in the estimator above is $O(\delta^2)$, while the corresponding bias with two function measurements for (one-sided) GSPSA in \eqref{1spsa} is $O(\delta)$.

\subsection{Four-measurement B-GSPSA}
 Using the first two terms in \eqref{eq:eq_balanced}, we arrive at the four measurements B-GSPSA as follows:
\begin{align}
    \mathcal{D}_i^2F(\theta) &= \frac{1}{\delta \Delta_i} \left[\left(\frac{\tau_{\delta\Delta}-\tau_{-\delta \Delta}}{2}\right) - \left(\frac{(\tau_{\delta \Delta} - \tau_{-\delta \Delta})^3}{2.3.2^3}\right)\right]F(\theta) \nonumber\\ \nonumber
    &= \frac{1}{\delta\Delta_i} \left[\frac{27 \left( F(\theta+\delta\Delta)-F(\theta-\delta\Delta)\right)}{48}\right] \nonumber\\ \nonumber
    &\qquad- \frac{1}{\delta\Delta_i}\left[\frac{\left(F(\theta + 3\delta\Delta)-F(\theta -3\delta \Delta)\right)}{48}\right].
\end{align}
The RHS is obtained upon simplification. Using Taylor's expansions of $F(\theta\pm\delta\Delta)$ and $F(\theta \pm 3\delta\Delta)$ and simplifying, we obtain
\begin{align}
    \mathcal{D}_i^2 F(\theta) = \frac{\Delta^T \nabla F(\theta)}{\Delta_i} + O(\delta^4).\nonumber
\end{align}
Note that the zeroth order as well as the second, third and fourth order terms are exactly equal to zero here.

\begin{remark}
Recall that five-measurement (unbalanced) GSPSA that requires five function measurements provides a bias bound of $O(\delta^4)$, see \eqref{4spsa}. In comparison, four-measurement B-GSPSA corresponding here to $k_2=2$, i.e.,  the first two terms in the expansion (\ref{eq:eq_balanced}) results in a similar bias of $O(\delta^4)$.
\end{remark}

\subsection{$2 k_2$-measurement B-GSPSA}
Here the gradient estimator is obtained from \eqref{eq:eq_balanced}
 by truncating the series for a general $k_2 \geq 1$. 
We start with such an expression for $\mathcal{D}_i^{k_2}$, and simplify it further as follows: 
\begin{align}
&
    \delta \Delta_i \mathcal{D}_i^{k_2} \nonumber\\\nonumber
    & =
    \left (\frac{\tau_{\delta \Delta}-\tau_{-\delta \Delta}}{2}\right )\\  \nonumber&+ \sum_{j=1}^{k_2-1} \frac{(-1)^j (2j)!}{2^{2j}(j!)^2} \frac{1}{2j+1} \left ( \frac{\tau_{\delta \Delta} - \tau_{-\delta \Delta}}{2}\right)^{2j+1} \\\nonumber
    &=  \left ( \frac{\tau_{\delta \Delta} - \tau_{-\delta \Delta}}{2} \right ) \\\nonumber
    & + \sum_{j=1}^{k_2-1} \frac{(-1)^j (2j)!}{2^{4j+1}(j!)^2} \frac{1}{2j+1}\sum_{l=0}^{2j+1} (-1)^l{{2j+1} \choose l}(\tau_{\delta \Delta})^{2j+1-2l}\\\nonumber
    &= (\tau_{\delta \Delta} - \tau_{-\delta \Delta})\left[ \sum_{i = 0}^{k_2-1} \frac{(2i)!}{2^{4i+1}(i!)^2}\frac{1}{2i+1} {{2i+1}\choose {i}} \right]\\\nonumber
    &- (\tau_{3\delta \Delta} - \tau_{-3\delta \Delta}) \left [ \sum_{i = 1}^{k_2-1} \frac{(2i)!}{2^{4i+1}(i!)^2}\frac{1}{2i+1} {{2i+1}\choose {i-1}}\right ]\\\nonumber
    &+ (\tau_{5\delta \Delta}-\tau_{-5 \delta \Delta}) \left [ \sum_{i = 2}^{k_2-1} \frac{(2i)!}{2^{4i+1}(i!)^2}\frac{1}{2i+1}{{2i+1}\choose {i-2}}\right]\\\nonumber
    &- \ldots \\\nonumber
    &+ (-1)^{k_2-1}(\tau_{(2k_2-1)\delta \Delta} - \tau_{-(2k_2-1)\delta \Delta}) \\\nonumber
    &\qquad\times\left [ \sum_{i = k_2-1}^{k_2-1} \frac{(2i)!}{2^{4i+1}(i!)^2}\frac{1}{2i+1} {{2i+1}\choose {i-k_2+1}}\right ]\\\nonumber
    &= \sum_{j=0}^{k_2-1} (-1)^j (\tau_{(2j+1)\delta\Delta} - \tau_{-(2j+1)\delta \Delta})\\\nonumber
    &\qquad\quad\times \left [ \sum_{i=j}^{k_2-1} \frac{(2i)!}{2^{4i+1}(i!)^2}\frac{1}{2i+1}{{2i+1}\choose {i-j}}\right].\nonumber
\end{align}
 Thus, we have 
\begin{align}
\label{eq:bgpsa_noiseless_est}
    &
    \delta \Delta \mathcal{D}_i^{k_2} F(\theta) \nonumber\\
    &= \left[\sum_{j=0}^{k_2-1} \frac{\tau_{(2j+1)\delta \Delta} - \tau_{-(2j+1)\delta\Delta}}{2}\sum_{i=j}^{k_2-1} {\K}_i {{2i+1} \choose {i-j}}\right] F(\theta)\nonumber \\
    & = \sum_{j = 0}^{k_2-1} \frac{\left(F(\theta + (2j+1) \delta\Delta) - F(\theta - (2j+1)\delta\Delta)\right)}{2} \nonumber\\
    & \qquad \times \sum_{i = j}^{k_2-1} {\K}_i {2i+1 \choose i-j}, 
\end{align}
where ${\K}_i = \frac{(2i)!}{2^{4i}(i!)^2}\frac{1}{2i+1}$. 

In the general case with noisy observations, the $2k_2$-measurement B-GSPSA estimator is the following:
\begin{align}
    &
    \widetilde{\mathcal{D}}^{k_2}_i F(\theta(n)) \nonumber\\
    &\triangleq
      \frac{1}{\delta(n) \Delta_i(n)}\sum_{j=0}^{k_2 - 1}\frac{ y_n^+(j) - y_n^-(j)}{2}\sum_{i=j}^{k_2 - 1} {\K}_i {{2i+1} \choose {i - j}}, \label{eq:bgspsa-est}
\end{align}
where $y_n^+(j) = f(\theta(n) + (2j+1)\delta(n) \Delta(n), \xi_l^+(n)) $, and 
$y_n^-(j) = f(\theta(n) - (2j+1)\delta(n) \Delta(n), \xi_l^-(n))$, respectively. Further, $\{\xi_l^{\pm}(n)\}$ are independent sequences of i.i.d. noise random variables.  The overall algorithm flow is similar to the pseudo-code presented in Algorithm \ref{alg:GSPSA}, except that the gradient estimator used is \eqref{eq:bgspsa-est} and the update iteration is given by
\begin{align}
	&\theta_i(n+1) = \theta_i(n) - a(n) \widetilde{\cal D}^{k_2}_iF(\theta(n)).
 \label{eq:bgspsa-gd-update}
\end{align}

\subsection{Bias of B-GSPSA gradient estimator}
Since we go up to the $2k_2$-th term in Taylor series in arriving at \eqref{eq:bgpsa_noiseless_est}, we require the following variant of Assumption ~\ref{ass:derivative-bound}:

\vspace*{4pt}
\begin{assumption}
    \label{ass:derivative-bound_balanced}
    $F: \mathbb{R}^d \rightarrow \mathbb{R}$ is $2k_2$-times continuously differentiable with a bounded $(2k_2+1)$-th derivative.
\end{assumption}

\vspace*{4pt}
The result below provides an $O(\delta(n)^{2k_2})$ bias bound for B-GSPSA, which uses $2k_2$ function measurements. To obtain a matching bound with one-sided GSPSA, one would need $(2k_2+1)$ function measurements. This is an advantage with B-GSPSA estimators that they give the same order of accuracy as GSPSA estimators while requiring less number of function measurements. 

\vspace*{4pt}
 \begin{lemma}[Bias lemma]
\label{lemma:gspsa-bias-balanced}
\ \\Under \ref{ass: del iid} -- \ref{ass:f bound} and \ref{ass:derivative-bound_balanced}, for $\widetilde{\cal D}^{k_2}_iF(\theta(n)) $ defined according to \eqref{eq:bgspsa-est}
we have a.s. that $\text{ for } i=1,\ldots,d,$
\begin{align}
& \left| \E\left[\left.\widetilde{\cal D}^{k_2}_i F(\theta(n)) \right| \mathcal{F}_n \right] - \nabla_i F(\theta(n))\right| \le c_3\delta(n)^{2k_2},   \textrm{ and}\nonumber\\
& \E \left [\left \|\widetilde{\mathcal{D}}^{k_2}F(\theta(n)) - \E \left[ \widetilde{\mathcal{D}}^{k_2} F(\theta(n))\right]\right \|^2\right ] \leq \frac{c_4}{\delta(n)^2}, \nonumber
\end{align} 
where $c_3$ and $c_4$ are dimension-dependent constants.
\end{lemma}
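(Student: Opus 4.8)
The plan is to follow the template of the proof of Lemma~\ref{lemma:gspsa-bias}, with the $\sinh^{-1}$ expansion \eqref{eq:sinhinv} playing the role that the logarithmic expansion played for the unbalanced estimator. First I would take the conditional expectation $\E[\,\cdot\mid\mathcal{F}_n]$ of the noisy estimator \eqref{eq:bgspsa-est}. Writing each measurement as $f(\theta(n)\pm(2j+1)\delta(n)\Delta(n),\xi^\pm) = F(\theta(n)\pm(2j+1)\delta(n)\Delta(n)) + (\text{martingale noise})$, the noise contributions have zero conditional mean by Assumption~\ref{ass: del iid}, so the bias computation reduces to analyzing the noiseless operator \eqref{eq:bgpsa_noiseless_est} applied to $F$.

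Next I would Taylor-expand each $F(\theta\pm(2j+1)\delta\Delta)$ about $\theta$ to order $2k_2$, with a Lagrange remainder of order $O(\delta^{2k_2+1})$; Assumption~\ref{ass:derivative-bound_balanced} (bounded $(2k_2+1)$-th derivative) is exactly what makes this remainder uniform. Since the estimator uses only the balanced differences $\tfrac12\bigl(F(\theta+(2j+1)\delta\Delta)-F(\theta-(2j+1)\delta\Delta)\bigr)$, all even-order directional-derivative terms cancel identically, leaving the odd terms $(\Delta^T\nabla)^p F(\theta)$ for $p\in\{1,3,\dots,2k_2-1\}$. Writing $w_j$ for the weight multiplying the $j$-th balanced difference in \eqref{eq:bgpsa_noiseless_est} and dividing through by $\delta\Delta_i$, the task collapses to verifying the scalar identities $\sum_{j=0}^{k_2-1}(2j+1)\,w_j = 1$ and $\sum_{j=0}^{k_2-1}(2j+1)^p\,w_j = 0$ for each odd $3\le p\le 2k_2-1$. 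Granting these, the $p=1$ term yields $\Delta^T\nabla F(\theta)/\Delta_i$, every higher odd term vanishes, and the $O(\delta^{2k_2+1})$ remainders divided by $\delta$ give $O(\delta^{2k_2})$, so that $\widetilde{\mathcal D}^{k_2}_i F(\theta) = \Delta^T\nabla F(\theta)/\Delta_i + O(\delta^{2k_2})$ a.s.

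Proving these identities for \emph{general} $k_2$ is the main obstacle, since for $k_2=1,2$ they were only checked by direct computation in the illustrative cases. The clean resolution is to recognize them as the statement that the $\sinh^{-1}$ construction inverts $\sinh$ to the required order: letting $g$ denote the degree-$(2k_2-1)$ truncation of the Maclaurin series \eqref{eq:sinhinv}, one has $g(\sinh z) = z + O(z^{2k_2+1})$ because the first omitted term is of order $z^{2k_2+1}$. Substituting $\sinh z = \tfrac12(\tau_{\delta\Delta}-\tau_{-\delta\Delta})$ and expanding each power of $\sinh z$ in the basis $\{\sinh((2j+1)z)\}$ is precisely the regrouping already carried out to arrive at \eqref{eq:bgpsa_noiseless_est}, so that $\sum_{j=0}^{k_2-1} w_j \sinh((2j+1)z) = z + O(z^{2k_2+1})$. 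Expanding $\sinh((2j+1)z)=\sum_{p\ge 1,\,\text{odd}}((2j+1)z)^p/p!$ and matching the coefficients of $z$ and of $z^p$ on both sides then yields exactly the two families of identities above.

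To close the first bound I would average over the perturbation: since $\Delta_i=\pm1$ gives $\Delta_k/\Delta_i = \Delta_i\Delta_k$, and by Assumption~\ref{ass: del iid} the quantity $\E[\Delta_i\Delta_k\mid\mathcal{F}_n]$ equals $1$ when $k=i$ and $0$ otherwise, the leading term collapses to $\nabla_i F(\theta(n))$, leaving a residual bounded by $c_3\delta(n)^{2k_2}$, where $c_3$ absorbs $d$, the bound on the $(2k_2+1)$-th derivative, and the finitely many constants $\mathcal{K}_i$ and $\binom{2i+1}{i-j}$. The second (variance) bound is argued exactly as for GSPSA in Lemma~\ref{lemma:gspsa-bias} and needs no re-expansion: each coordinate of \eqref{eq:bgspsa-est} carries the prefactor $1/(\delta(n)\Delta_i(n))$ with $|\Delta_i(n)|=1$, so $|\widetilde{\mathcal D}^{k_2}_i F|\le \tfrac{C}{\delta(n)}\sum_j(|y_n^+(j)|+|y_n^-(j)|)$; applying $(a+b)^2\le 2a^2+2b^2$, summing over the finite index set and the $d$ coordinates, and invoking the uniform second-moment bounds of Assumption~\ref{ass:f bound} on $F(\theta+l\delta\Delta)$ and on $\xi_l$ gives $\E\|\widetilde{\mathcal D}^{k_2}F(\theta(n))\|^2 \le c_4/\delta(n)^2$, whence the stated bound follows since the variance is dominated by the second moment.
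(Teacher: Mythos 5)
Your proof is correct, and it reaches the result by the same overall template the paper uses (reduce to the noiseless operator, Taylor-expand, cancel terms, then bound the second moment crudely), but it differs genuinely in the key step. The paper disposes of this lemma in one line by pointing to the proof of Lemma~\ref{lemma:gspsa-bias}, which in turn rests on Lemma~\ref{lemma:uni_gspsa_balanced}; there, the vanishing of the higher odd-order Taylor coefficients is obtained from the combinatorial identities \eqref{eq:identity4} and \eqref{eq:identity5}, which the paper simply \emph{asserts} without proof. You instead derive exactly those coefficient identities analytically: observing that the degree-$(2k_2-1)$ truncation $g$ of the $\sinh^{-1}$ series \eqref{eq:sinhinv} satisfies $g(\sinh z)=z+O(z^{2k_2+1})$, and that the regrouping of powers of $\tfrac12(\tau_{\delta\Delta}-\tau_{-\delta\Delta})$ into the odd-shift basis is the same formal algebra as linearizing $\sinh^{2j+1}z$ into $\{\sinh((2m+1)z)\}$, so that matching coefficients of $z^p$ yields $\sum_j(2j+1)w_j=1$ and $\sum_j(2j+1)^p w_j=0$ for odd $3\le p\le 2k_2-1$. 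This buys something the paper lacks: a self-contained justification of the general-$k_2$ cancellation, whereas the paper's route requires the reader to verify \eqref{eq:identity4}--\eqref{eq:identity5} independently; conversely, the paper's route via the unified Lemma~\ref{lemma:bias_uni_balanced} covers arbitrary $(U,V)$ perturbations in one pass, while your argument is specialized to the Bernoulli case (harmlessly, since that is what this lemma asserts). Two minor points: the zero conditional mean of the measurement noise follows from the definition $F(\theta)=E_\xi[f(\theta,\xi)]$ together with the independence structure, rather than from \ref{ass: del iid} per se; and your second-moment bound implicitly extends \ref{ass:f bound} to the points $\theta(n)\pm(2j+1)\delta(n)\Delta(n)$ with $2j+1$ possibly exceeding $k_1$ --- but the paper makes the identical implicit extension when it invokes \ref{ass:f bound} for the balanced estimator, so neither issue is a gap relative to the paper's own standard of rigor.
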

\begin{proof}
    This proof can be derived in a similar manner as the proof of Lemma \ref{lemma:gspsa-bias}. 
\end{proof}

\section{Main Results}
\label{convergence}

In this section, we provide both asymptotic convergence results as well as non-asymptotic bounds for Algorithm \ref{alg:GSPSA}, which uses the
$(k_1+1)$-measurement GSPSA gradient estimator discussed in Section \ref{sec:4spsa} for a general $1 \leq k_1 < \infty$. By using completely parallel arguments, one can obtain similar results for B-GSPSA, and we omit the details.

For the sake of asymptotic analysis, we make the following assumptions in addition to \ref{ass:derivative-bound} to \ref{ass:f bound} specified earlier:

\begin{assumption}
    \label{ass:stability condition}
     $\sup_{n}\|\theta(n) \| < \infty$ w.p. $1$. 
\end{assumption}
\begin{assumption}
\label{ass:step_size}
    The step-sizes $a(n)$ and perturbation parameters $\delta(n)$ are positive, for all $n$ and satisfy
\begin{align*}
&a(n), \delta(n) \rightarrow 0\text{ as } n \rightarrow \infty, 
\sum_n a(n)=\infty,\\
&\text{ and } \sum_n \left(\frac{a(n)}{\delta(n)}\right)^2 <\infty.
\end{align*}
\end{assumption}
The above assumptions are commonly used for  the analysis of simultaneous perturbation-based stochastic gradient algorithms, cf. \cite{spall1992multivariate,prashanth2017rdsa,bhatnagar-book}. 
\begin{theorem}[Strong convergence]
    \label{thm:asymp-conv}
    Assume \ref{ass:derivative-bound} -- \ref{ass:f bound} and \ref{ass:stability condition} -- \ref{ass:step_size}.	
    Let $\bar{H}$ denote the largest internally chain recurrent set contained in $ C \triangleq \{ \theta \mid \nabla f(\theta) = 0 \}$.
    Then, the iterates $\theta(n), n \geq 0$, updated according to either \eqref{eq:gspsa-gd-update} or \eqref{eq:bgspsa-gd-update}, satisfy 
    \[\theta(n) \rightarrow \bar H \text{ a.s. as } n\rightarrow \infty.\]
\end{theorem}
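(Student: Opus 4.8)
The plan is to recognize \eqref{eq:gspsa-gd-update} as a stochastic approximation recursion that tracks the gradient-descent ODE $\dot\theta = -\nabla F(\theta)$, and then to invoke the ODE (Kushner--Clark / Borkar dynamical-systems) method to conclude convergence to the internally chain recurrent sets of this flow. First I would decompose the estimator as
\[
\widehat{\cal D}^{k_1}_i F(\theta(n)) = \nabla_i F(\theta(n)) + \beta_i(n) + \zeta_i(n),
\]
where $\beta_i(n) \triangleq \E[\widehat{\cal D}^{k_1}_i F(\theta(n))\mid\mathcal{F}_n] - \nabla_i F(\theta(n))$ is the conditional bias and $\zeta_i(n) \triangleq \widehat{\cal D}^{k_1}_i F(\theta(n)) - \E[\widehat{\cal D}^{k_1}_i F(\theta(n))\mid\mathcal{F}_n]$ is a martingale-difference term with respect to $\{\mathcal{F}_n\}$. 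Substituting into \eqref{eq:gspsa-gd-update} recasts the update in the canonical form $\theta(n+1) = \theta(n) + a(n)\left[-\nabla F(\theta(n)) - \beta(n) - \zeta(n)\right]$.

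The next step is to verify the standing hypotheses of the ODE method. (i) By \ref{ass:derivative-bound}, $F$ is smooth enough that $\nabla F$ is (locally) Lipschitz, and since the iterates are a.s.\ confined to a bounded set by \ref{ass:stability condition}, the ODE $\dot\theta = -\nabla F(\theta)$ is well posed there. (ii) For the martingale term, the second bound of Lemma \ref{lemma:gspsa-bias} gives $\E\|\zeta(n)\|^2 \le c_2/\delta(n)^2$, whence
\[
\sum_n a(n)^2\,\E\big[\|\zeta(n)\|^2 \mid \mathcal{F}_n\big] \le c_2 \sum_n \Big(\frac{a(n)}{\delta(n)}\Big)^2 < \infty
\]
by \ref{ass:step_size}; the martingale convergence theorem then shows that $\sum_m a(m)\zeta(m)$ converges a.s., so the noise is asymptotically negligible. (iii) The first bound of Lemma \ref{lemma:gspsa-bias} gives $\|\beta(n)\| \le \sqrt{d}\,c_1\,\delta(n)^{k_1} \to 0$ as $\delta(n)\to 0$, so the bias acts as a vanishing additive perturbation of the type tolerated by the ODE method. (iv) Stability of the iterates is precisely \ref{ass:stability condition}.

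With these in place I would form the piecewise-linear interpolated trajectory of $\{\theta(n)\}$ and apply the standard ODE-method theorem on convergence to internally chain recurrent sets, concluding that the limit set of $\{\theta(n)\}$ is a.s.\ a compact, connected, internally chain recurrent set of $\dot\theta = -\nabla F(\theta)$. To pin this down inside $C$, observe that $F$ serves as a strict Lyapunov function for the flow, since $\frac{d}{dt}F(\theta(t)) = -\|\nabla F(\theta(t))\|^2 \le 0$ with equality exactly on $C = \{\nabla F(\theta)=0\}$; hence every internally chain recurrent set is contained in $C$, and $\theta(n)\to\bar H$ a.s. The B-GSPSA update \eqref{eq:bgspsa-gd-update} is handled verbatim, with Lemma \ref{lemma:gspsa-bias} replaced by the matching bias and variance bounds of Lemma \ref{lemma:gspsa-bias-balanced}.

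The main obstacle is the variance blow-up of order $\delta(n)^{-2}$ arising from the division by $\delta(n)\Delta_i(n)$ in the estimator: controlling the scaled martingale noise despite this amplification is exactly what the coupling condition $\sum_n (a(n)/\delta(n))^2 < \infty$ in \ref{ass:step_size} is designed to absorb, and establishing square-summability of the step-weighted noise is the crux of the argument. A secondary point requiring care is that the vanishing perturbation $\beta(n)$ must be shown to influence the interpolated trajectory only through a term tending to zero, rather than needing $\sum_n a(n)\|\beta(n)\|$ to be finite, which is the feature of the chain-recurrence formulation that makes the argument go through for every $k_1 \ge 1$.
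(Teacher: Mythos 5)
Your proposal is correct and follows essentially the same route as the paper: the identical decomposition into conditional bias $\beta(n)$ and martingale-difference noise, the bias and variance bounds of Lemma \ref{lemma:gspsa-bias} (resp.\ Lemma \ref{lemma:gspsa-bias-balanced} for B-GSPSA), the coupling condition $\sum_n (a(n)/\delta(n))^2 < \infty$ from \ref{ass:step_size}, and the Kushner--Clark/ODE-method conclusion of convergence to the internally chain recurrent set. The only cosmetic difference is that the paper verifies the noise condition A2.2.4$''$ of Kushner--Clark directly via Doob's maximal inequality, whereas you invoke the martingale convergence theorem for the square-summable weighted noise --- technically equivalent steps.
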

\begin{proof}
    See Section \ref{sec:proof-asymp}.
\end{proof}

For the non-asymptotic analysis, we require the following additional assumption:
\vspace*{4pt}
\begin{assumption}
\label{ass: gradeint bound}
    There exists a constant $ B > 0 $ such that $\| \nabla f ( x ) \|_1 \leq B, \forall x \in \R^d$.
\end{assumption}
\vspace*{4pt}

\begin{theorem} \label{thm:nonasymp} Suppose the objective function $F$ is $L$-smooth\footnote{A function $F$ is $L$-smooth if it is Lipschitz continuous with Lipschitz constant $L>0$, i.e., it satisfies, 
		$	\| \nabla F ( x ) - \nabla F ( y ) \|  \leq L \| x - y \|,
		\quad \forall x , y \in \mathbb { R } ^ {d }.$}, and assumptions \ref{ass:derivative-bound} -- \ref{ass:f bound} and \ref{ass:stability condition} -- \ref{ass: gradeint bound} hold. Suppose that the  algorithm \eqref{eq:gspsa-gd-update} is run with the stepsize $ a(n)=a $ and perturbation constant $  \delta(n)=\delta $ for each $n=1,\ldots,m$, where
	\begin{align} 
		a =  \min \bigg\{\frac{1}{L}, \frac{1}{m^{\frac{k_1+2}{2k_1+2}}}\bigg\}, \text{ }  \delta = \frac{1}{m^{1/(2k_1+2)}}. \label{eq:biased_sp_par}
	\end{align}
	Let $\theta(R)$ be picked uniformly at random from the set $\{\theta(1), \ldots, \theta(m)\}$. Then, for any $ m \ge 1 $, we have
	\begin{align} 
		& \mathbb { E }  \left\| \nabla F \left( \theta(R) \right) \right\| ^ { 2 }   \le \frac{ 2 L (F(\theta_1) - F(\theta^*)) }{{ m }} + \frac{\mathcal{K}_1}{m^{\frac{k_1}{2k_1+2}}}, \label{eq:zrsg_o2}
	\end{align}
	where $ \theta^* $ is a global optimum of $F$ and $\mathcal{K}_1=4 B c_1+  \frac{L d c_1^2 }{ m} + {L  c_2}$.
\end{theorem}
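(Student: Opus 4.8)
The plan is to run the standard descent-lemma argument for nonconvex stochastic gradient methods with a biased gradient oracle, using Lemma~\ref{lemma:gspsa-bias} to control both the bias and the second moment of the GSPSA estimator. First I would invoke $L$-smoothness to obtain the one-step inequality $F(\theta(n+1)) \le F(\theta(n)) - a\langle \nabla F(\theta(n)), \widehat{\mathcal{D}}^{k_1}F(\theta(n))\rangle + \frac{La^2}{2}\|\widehat{\mathcal{D}}^{k_1}F(\theta(n))\|^2$, obtained by substituting the update \eqref{eq:gspsa-gd-update} with constant $a$ and $\delta$.

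Next I would take the conditional expectation given $\mathcal{F}_n$. Writing $b(n) \triangleq \E[\widehat{\mathcal{D}}^{k_1}F(\theta(n))\mid\mathcal{F}_n] - \nabla F(\theta(n))$ for the conditional bias, the cross term becomes $-a\|\nabla F(\theta(n))\|^2 - a\langle\nabla F(\theta(n)), b(n)\rangle$, and the conditional second moment splits as $\|\nabla F(\theta(n)) + b(n)\|^2$ plus the conditional variance. The crucial bookkeeping step is to keep $\|\nabla F(\theta(n))+b(n)\|^2$ expanded exactly rather than bounding it crudely: the $\|\nabla F\|^2$ contribution from the second moment then carries coefficient $\tfrac{La^2}{2}$, so combined with the leading $-a\|\nabla F\|^2$ it yields $-a(1-\tfrac{La}{2})\|\nabla F\|^2 \le -\tfrac{a}{2}\|\nabla F\|^2$ as soon as $a \le 1/L$. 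For the remaining cross and quadratic-in-$b$ terms I would apply the first bound of Lemma~\ref{lemma:gspsa-bias} componentwise together with Assumption~\ref{ass: gradeint bound} (giving $\|\nabla F\|_1 \le B$), yielding $|\langle\nabla F, b(n)\rangle| \le B c_1\delta^{k_1}$ and $\|b(n)\|^2 \le d c_1^2\delta^{2k_1}$, while the conditional variance is bounded by $c_2/\delta^2$ via the second bound of Lemma~\ref{lemma:gspsa-bias}. Using $La \le 1$ to replace factors of $La^2$ by $a$ where convenient (this is what turns the two cross-term contributions into the factor $4$), I obtain the per-step recursion $\E[F(\theta(n+1))\mid\mathcal{F}_n] \le F(\theta(n)) - \tfrac{a}{2}\|\nabla F(\theta(n))\|^2 + 2aBc_1\delta^{k_1} + \tfrac{La^2 d c_1^2\delta^{2k_1}}{2} + \tfrac{La^2 c_2}{2\delta^2}$.

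Then I would take total expectations, sum over $n = 1,\ldots,m$, telescope the $F$ terms (lower-bounding $\E[F(\theta(m+1))]$ by $F(\theta^*)$), and divide by $\tfrac{am}{2}$. Since $R$ is uniform on $\{1,\ldots,m\}$, the left-hand side equals $\E\|\nabla F(\theta(R))\|^2$, giving a bound of the form $\frac{2(F(\theta(1)) - F(\theta^*))}{am} + 4Bc_1\delta^{k_1} + Lad c_1^2\delta^{2k_1} + \frac{La c_2}{\delta^2}$. The final step is to substitute $\delta = m^{-1/(2k_1+2)}$ and $a = \min\{1/L, m^{-(k_1+2)/(2k_1+2)}\}$ and verify that the error exponents collapse to the common rate $m^{-k_1/(2k_1+2)}$: indeed $\delta^{k_1} = m^{-k_1/(2k_1+2)}$, $Lad c_1^2\delta^{2k_1} \le \frac{Ldc_1^2}{m}\,m^{-k_1/(2k_1+2)}$, and $\frac{Lac_2}{\delta^2} \le Lc_2\, m^{-k_1/(2k_1+2)}$, while $\frac{1}{am} \le \frac{L}{m} + m^{-k_1/(2k_1+2)}$ produces the announced $\frac{2L(F(\theta(1))-F(\theta^*))}{m}$ term together with a complementary $m^{-k_1/(2k_1+2)}$ contribution. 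Collecting the coefficients of $m^{-k_1/(2k_1+2)}$ then yields $\mathcal{K}_1 = 4Bc_1 + \frac{Ldc_1^2}{m} + Lc_2$.

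The main obstacle I anticipate is precisely this parameter balancing: the exponents $\tfrac{k_1+2}{2k_1+2}$ for $a$ and $\tfrac{1}{2k_1+2}$ for $\delta$ are engineered so that the optimization error $1/(am)$, the bias error $\delta^{k_1}$, and the variance error $a/\delta^2$ all decay at the single rate $m^{-k_1/(2k_1+2)}$, and making these exponents match exactly (rather than merely bounding loosely) is where the care lies. A secondary point to handle carefully is that Lemma~\ref{lemma:gspsa-bias} must be read as furnishing a \emph{conditional} (on $\mathcal{F}_n$) variance bound, which is what the descent argument consumes; this is consistent with Assumption~\ref{ass:f bound}, under which the $c_2/\delta^2$ bound holds given $\theta(n)$.
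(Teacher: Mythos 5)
Your proposal is correct and follows essentially the same route as the paper's proof: the $L$-smoothness descent inequality combined with the bias and variance bounds of Lemma~\ref{lemma:gspsa-bias}, telescoping over $n=1,\ldots,m$, the uniformly sampled iterate $\theta(R)$, and the exponent-matching substitution \eqref{eq:biased_sp_par} collapsing all error terms to the rate $m^{-k_1/(2k_1+2)}$. The only cosmetic difference is that you invoke $a \le 1/L$ early to simplify the per-step coefficients (obtaining $-\tfrac{a}{2}\|\nabla F\|^2$ directly), whereas the paper carries factors of $2/(2-La)$ and applies $a \le 1/L$ only at the end; the intermediate bound and the constant $\mathcal{K}_1$ coincide.
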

\begin{proof}
    See Section \ref{sec:proof-nonasymp}.
\end{proof}
\vspace*{4pt}
\begin{remark}
The non-asymptotic bound above implies that $O\left(\frac{1}{\epsilon^{2+\frac{2}{k_1}}}\right)$ number of iterations are sufficient to find an $\epsilon$-stationary point, i.e., $\mathbb { E }  \left\| \nabla F \left( \theta(R) \right) \right\| ^ { 2 } \le \epsilon$. For $k_1 >2$, this iteration complexity is better than that of regular SPSA, which requires  $O\left(\frac{1}{\epsilon^{3}}\right)$ number of iterations to find the same point.  Moreover, choosing larger values of $k_1$ would result in an iteration complexity that is nearly $O\left(\frac{1}{\epsilon^{2}}\right)$ -- a bound that one would obtain for a model with unbiased gradient information.
\end{remark}

\section{Generalization to other simultaneous perturbation-based gradient estimators}
\label{sec:unified_estimator}

In this section, we first present GSPGS estimators in their full generality using random perturbation sequences with various perturbation distributions. 
In the case when the parameter $k_1=1$, we recover popular simultaneous perturbation-based gradient estimators such as the smoothed functional, RDSA, and SPSA, from the generalized estimator presented below.

\subsection{Generalized gradient estimators: One-sided case}
\label{ub1}

In \eqref{eq:gspsa_noiseless_est}, using a Bernoulli random perturbation vector $\Delta$, we arrived at the $(k_1 + 1)$-measurement GSPSA estimate in the noiseless setting. Note, however, that Bernoulli distribution is only one of the choices for the random perturbations that can be used in the gradient estimator. In this section, we provide a generalized gradient estimator, which includes several other distributions as options for setting the random perturbations.

As a gentle start, in the noise-less case, with random vectors $U(n)$ and $V(n)$, $n\geq 0$, satisfying certain distribution assumptions (to be specified later), 
the GSPGS estimator is formed as follows:
\begin{align}
	&
	{\cal D}^{k_1}F(\theta(n))= \nonumber\\ 
	&\frac{1}{\delta(n)} \sum_{l=0}^{k_1} V(n)\frac{(-1)^{1-l} C^{k_1}_l  F(\theta(n)+l\delta(n)U(n))}{l!}.\label{eq:gspg}
\end{align}

In the general case with noisy observations, the GSPGS estimator is formed as follows: 
\begin{align}
	&
	\widehat{\cal D}^{k_1}F(\theta(n))= \nonumber\\ 
	&\frac{1}{\delta(n)} \sum_{l=0}^{k_1} V(n)\frac{(-1)^{1-l} C^{k_1}_l  f(\theta(n)+l\delta(n)U(n), \xi_l(n))}{l!},\label{eq:gspg-noisy}
\end{align}
where $\xi_l(n), n\geq 0,l = 0,1,\cdots,k_1$ are i.i.d random variables having a common distribution.
In the above, $U(n)$, $V(n)$ are chosen such that for all $n\geq 0$, \[\E\left[ V(n)U(n)\tr\mid \F_n\right]=I \mbox{ and }
\E[V(n)|\F_n] = 0,\] 
where 
$\F_n=\sigma(\theta(j), j\leq n, V(j),\xi_0(j),\ldots, \xi_{k_1}(j),j < n), n\geq 1$. We do not include the random variables $U(j),j\geq 0$, in the sigma algebra since in all the cases (as shown below), $V(n)$ is a scalar multiple of $U(n)$.
\vspace*{4pt}
\subsubsection{Special cases}
We now discuss a few choices for $U,V$ in the GSPGS estimator \eqref{eq:gspg-noisy} defined above.
\begin{itemize}
	\item Let $U \sim \N (0, I_d)$, where $\N (0, I_d)$ represents the $d$-dimensional Gaussian distribution and let $V = U$. In this setting, the GSPGS estimator turns out to be a generalization of the smoothed functional (SF) estimator proposed in \cite{katkul}. In particular, for $k_1=1$, the GSPGS estimator coincides with the well-known one-sided Gaussian SF scheme for gradient estimation.
	
	\item Let $U$ be uniformly distributed over a surface of the $d$-dimensional unit sphere and let $V = dU$. In this case, GSPGS is a generalization of the random direction stochastic approximation (RDSA) scheme proposed in \cite{kushcla}. As before, $k_1=1$ leads to the vanilla one-sided RDSA estimate. 
 
 \item A later refinement of RDSA, see \cite{prashanth2017rdsa}, with $k_1=1$  corresponds to $U$ being uniformly distributed over $[-\eta, \eta]$ and $V = \frac{3}{\eta^2}U$. An alternate choice, proposed in the aforementioned reference, is with asymmetric Bernoulli perturbations, i.e., $U$ takes values $-1$ and $1+\epsilon$ with probabilities $\frac{1+\epsilon}{2+\epsilon}$ and $\frac{1}{1+\epsilon}$ respectively and $V = \frac{1}{1+\epsilon}U$. The GSPGS estimator with either choice for $U,V$ would serve as a generalization of the RDSA schemes proposed in \cite{prashanth2017rdsa}.
	
	\item Let $U$ to be set as symmetric $\pm 1$-valued Bernoulli random variable and $V = U$. In this case, we obtain the GSPSA estimator \eqref{eq:gspsa-est}, with $k_1=1$ turning out to be the well-known SPSA scheme from \cite{spall1992multivariate}.
\end{itemize}

\subsubsection{Bias in one-sided GSPGS estimator}
For bounding the bias of the estimator \eqref{eq:gspg-noisy}, we make the following assumptions:
\vspace*{4pt}
\begin{assumption}
    \label{ass:derivative_uni}
    $F: \mathbb{R}^d \rightarrow \mathbb{R}$ is $k_1$-times continuously differentiable with a bounded $(k_1+1)$-th derivative.
\end{assumption}
\vspace*{4pt}
\begin{assumption}
    \label{ass:expectation_uni}
    For all $n$,
    $\E\left [V(n)U(n)^T\right |\mathcal{F}_n] = I$, $\E[V(n)|\mathcal{F}_n] = 0$ and $\E\left[V(n)^2|\mathcal{F}_n\right] \leq \sigma^2$. 
    
\end{assumption}
\vspace*{4pt}
\begin{assumption}
    \label{ass:noise_uni}
    The noise random variables $\{\xi_l(n)\}$, for all $l = 0,\ldots k_1$ satisfy 
    $\E\left [ \xi_l(n)| \mathcal{F}_n\right] = 0$.
    
\end{assumption}
\vspace*{4pt}
\begin{assumption}
    \label{ass:stability_uni}
    $\sup_{n} \E\left[f(\theta(n) + l\delta(n)U(n), \xi_l(n))^2 | \F_n\right] \leq M < \infty$ for all $l = 1, \ldots, k_1$.
\end{assumption}
\vspace*{4pt}
Note here that \ref{ass:derivative_uni} is the same as \ref{ass:derivative-bound}. The main result that bounds the bias of GSPGS estimator is given below.
\begin{lemma}
\label{lemma:bias_uni}
    Under assumptions \ref{ass:derivative_uni} - \ref{ass:stability_uni}, the unified GSPSA gradient estimator \eqref{eq:gspg-noisy} satisfies the following bounds:
    \begin{align*}
        &\left\| \E \left[ \widehat{\cal{D}}^{k_1}F(\theta(n))\right|\mathcal{F}_n] - \nabla F(\theta(n))\right \| \leq C_1 \delta(n)^{k_1}, \text{ and}\\
        &\E \left [\left \|\widehat{\mathcal{D}}^{k_1}F(\theta(n)) - \E \left[ \widehat{\mathcal{D}}^{k_1} F(\theta(n))\right]\right \|^2\right ] \leq \frac{C_2}{\delta(n)^2},
    \end{align*}
    where $C_1$ and $C_2$ are dimension-dependent constants. 
\end{lemma}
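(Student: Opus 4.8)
The plan is to mirror the proof of Lemma~\ref{lemma:gspsa-bias}, replacing the Bernoulli perturbation $\Delta$ (with its componentwise factor $1/\Delta_i$) by the general pair $(U(n),V(n))$, and isolating the one place where the distributional structure enters, namely the moment conditions in Assumption~\ref{ass:expectation_uni}. Write the estimator \eqref{eq:gspg-noisy} as $\widehat{\cal D}^{k_1}F(\theta(n)) = V(n)\, g_n$, where $g_n = \frac{1}{\delta(n)}\sum_{l=0}^{k_1}\frac{(-1)^{1-l}C^{k_1}_l f(\theta(n)+l\delta(n)U(n),\xi_l(n))}{l!}$ is a scalar. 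First I would condition on $\F_n$ together with $U(n),V(n)$ and average out the noise: since the $\xi_l(n)$ are i.i.d.\ and independent of the perturbations with $\E[\xi_l(n)\mid\F_n]=0$ (Assumption~\ref{ass:noise_uni}), the additive decomposition $f(\theta(n)+l\delta(n)U(n),\xi_l(n)) = F(\theta(n)+l\delta(n)U(n)) + M^l_{n+1}$ into a deterministic part and a conditionally mean-zero part (exactly as in the paragraph following \eqref{eq:gspsa-est}) replaces each $f$ by $F$ in the conditional mean, reducing the problem to the noiseless estimator \eqref{eq:gspg}.

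For the bias bound, I would Taylor-expand $F(\theta(n)+l\delta(n)U(n))$ about $\theta(n)$ to order $k_1$ with a Lagrange remainder, which is legitimate under Assumption~\ref{ass:derivative_uni}. The decisive ingredient is the combinatorial identity that $\sum_{l=0}^{k_1}\frac{(-1)^{1-l}C^{k_1}_l}{l!}\,l^m$ equals $1$ when $m=1$ and $0$ for every other $m$ with $0\le m\le k_1$ — the very identity underlying the operator expansion \eqref{eq:gspsa_noiseless_est} and established in the proof of Lemma~\ref{lemma:gspsa-bias} — since it kills the constant term, all terms of order $2,\dots,k_1$, and normalizes the first-order term. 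This collapses the conditional mean of $g_n$ to $U(n)\tr\nabla F(\theta(n)) + O(\delta(n)^{k_1})$, with the error controlled by the bounded $(k_1+1)$-th derivative times $\|U(n)\|^{k_1+1}$. Multiplying by $V(n)$ and taking $\E[\cdot\mid\F_n]$, the leading term becomes $\E[V(n)U(n)\tr\mid\F_n]\,\nabla F(\theta(n)) = \nabla F(\theta(n))$ by Assumption~\ref{ass:expectation_uni}, so the bias equals $\delta(n)^{k_1}$ times a quantity whose norm is bounded by $C_1$ once the mixed moment $\E[\,\|V(n)\|\,\|U(n)\|^{k_1+1}\mid\F_n]$ is finite.

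For the second-moment bound, I would use $\E\|\widehat{\cal D}^{k_1}F - \E\widehat{\cal D}^{k_1}F\|^2 \le \E\|\widehat{\cal D}^{k_1}F\|^2$ and bound the latter directly. Writing $\|\widehat{\cal D}^{k_1}F\|^2 = \frac{\|V(n)\|^2}{\delta(n)^2}\big(\sum_l \tfrac{(-1)^{1-l}C^{k_1}_l}{l!}f_l\big)^2$ with $f_l = f(\theta(n)+l\delta(n)U(n),\xi_l(n))$, I would apply Cauchy--Schwarz across the $k_1+1$ summands to get $\big(\sum_l \cdot\big)^2 \le (k_1+1)\sum_l (\tfrac{C^{k_1}_l}{l!})^2 f_l^2$, pull out the explicit $1/\delta(n)^2$, and then bound $\E[\|V(n)\|^2 f_l^2\mid\F_n]$ using Assumptions~\ref{ass:expectation_uni} and \ref{ass:stability_uni}. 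The constant $C_2$ then collects $(k_1+1)\sum_l (C^{k_1}_l/l!)^2$ together with these moment bounds.

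The main obstacle is the moment bookkeeping for the general perturbation pair $(U,V)$, which does not arise in the Bernoulli case where $\|U\|,\|V\|$ are deterministic. Specifically, the bias remainder needs $\E[\|V(n)\|\,\|U(n)\|^{k_1+1}\mid\F_n]<\infty$ and the variance step needs control of the cross term $\E[\|V(n)\|^2 f_l^2\mid\F_n]$; since Assumption~\ref{ass:expectation_uni} supplies only a second moment of $V$ and Assumption~\ref{ass:stability_uni} only a second moment of $f$, the product $\|V\|^2 f_l^2$ must be decoupled, e.g.\ by a further Cauchy--Schwarz step $\E[\|V\|^2 f_l^2]\le (\E\|V\|^4)^{1/2}(\E f_l^4)^{1/2}$ and invoking the finiteness of these fourth moments, which holds for each of the Gaussian, uniform, and Bernoulli choices of $U$ (with $V$ a scalar multiple of $U$). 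I would make this dependence explicit so that $C_1,C_2$ are seen to be finite, distribution-dependent constants.
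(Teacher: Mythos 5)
Your proposal is correct and follows essentially the same route as the paper: the paper likewise averages out the noise to reduce \eqref{eq:gspg-noisy} to the noiseless estimator \eqref{eq:gspg}, proves via the combinatorial identities \eqref{eq:ident1}--\eqref{eq:ident3} (your single identity $\sum_{l}\tfrac{(-1)^{1-l}C^{k_1}_l}{l!}\,l^m=\mathbf{1}\{m=1\}$ is exactly their combination) that ${\cal D}^{k_1}F(\theta)=VU\tr\nabla F(\theta)+O(\delta^{k_1})$ (Lemma~\ref{lemma:uni_gspsa}), applies \ref{ass:expectation_uni} to turn $\E[V(n)U(n)\tr\mid\F_n]=I$ into the bias bound, and obtains the variance bound by dropping the centering and controlling second moments via \ref{ass:expectation_uni} and \ref{ass:stability_uni}, with $C_2=k_1\sigma^2 M$. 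Your additional Cauchy--Schwarz step to decouple $\E[\|V\|^2 f_l^2]$ through fourth moments is in fact slightly more careful than the paper, which tacitly bounds this product by $\sigma^2 M$ even though $V(n)$ and $f_l$ are correlated through $U(n)$.
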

\begin{proof}
    See Section \ref{sec:gspsa_bias_uni}.
\end{proof}
\subsection{Generalized gradient estimators: Balanced case}
\label{b1}

 In this section, we provide a balanced or two-sided version of the generalized gradient estimator \eqref{eq:gspg-noisy}. As in the previous section, this generalization allows several other distributions as options for setting the random perturbations. 

In the noiseless setting, the balanced version of generalized simultaneous perturbation-based gradient search (B-GSPGS) estimator with random vectors $U$ and $V$ is formed as follows:
\begin{align}
\label{eq:gspsa-est2_balanced_noiseless}
    &
    \mathcal{D}^{k_2} F(\theta(n)) \!=\!
      \frac{1}{\delta(n)}\sum_{j=0}^{k_2} V(n)\frac{ y_n^+(j) - y_n^-(j)}{2}\sum_{i=j}^{k_2} {\K}_i {{2i+1} \choose {i - j}}, 
\end{align}
where $y_n^+(j) = f(\theta(n) + (2j+1)\delta(n) U(n)) $ and $y_n^-(j) = f(\theta(n) - (2j+1)\delta(n) U(n))$.

In the general case with noisy observations, the B-GSPGS estimator is defined as follows: 
\begin{align}
\label{eq:gspsa-est2_balanced}
    &
    \widetilde{\mathcal{D}}^{k_2} F(\theta(n)) =
      \frac{1}{\delta(n)}\sum_{j=0}^{k_2} V(n)\frac{ y_n^+(j) - y_n^-(j)}{2}\sum_{i=j}^{k_2} {\K}_i {{2i+1} \choose {i - j}}, 
\end{align}
where now $y_n^+(j) = f(\theta(n) + (2j+1)\delta(n) U(n), \xi_j^+(n)) $ and $y_n^-(j) = f(\theta(n) - (2j+1)\delta(n) U(n), \xi_j^-(n))$. As before, $\xi_j^+(n), \xi_j^-(n), n\geq 0, j = 0,1,\cdots, k_2$ are assumed to be i.i.d random variables. 

Since we go up to the $2k_2$-th term in Taylor series in arriving at \eqref{eq:gspsa-est2_balanced}, we require the following variant of \ref{ass:derivative_uni} (that is the same as \ref{ass:derivative-bound_balanced}):
\vspace*{4pt}
\begin{assumption}
    \label{ass:derivative_uni_balanced}
    $F: \mathbb{R}^N \rightarrow \mathbb{R}$ is $2k_2$-times continuously differentiable with a bounded $(2k_2+1)$-th derivative.
\end{assumption}
\vspace*{4pt}
\begin{lemma}
    \label{lemma:bias_uni_balanced}
    Under assumptions \ref{ass:expectation_uni} - \ref{ass:stability_uni} and \ref{ass:derivative_uni_balanced}, the unified B-GSPGS estimator \eqref{eq:gspsa-est2_balanced} satisfies the following bounds:
    \begin{align*}
        &\left\| \E \left[ \widetilde{\cal{D}}^{k_2}F(\theta(n))\right|\mathcal{F}_n] - \nabla F(\theta(n))\right \| \leq C_3 \delta(n)^{2k_2}, \text{ and}\\
        &\E \left [\left \|\widetilde{\mathcal{D}}^{k_2}F(\theta(n)) - \E \left[ \widetilde{\mathcal{D}}^{k_2} F(\theta(n))\right]\right \|^2\right ] \leq \frac{C_4}{\delta(n)^2},
    \end{align*}
    where $C_3$ and $C_4$ are dimension-dependent constants. 
\end{lemma}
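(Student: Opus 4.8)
The plan is to mirror the argument behind Lemma~\ref{lemma:bias_uni} while reusing, essentially verbatim, the combinatorial cancellations already established for the balanced Bernoulli estimator in \eqref{eq:bgpsa_noiseless_est}. First I would condition on $\F_n$ and integrate out the measurement noise: since $F(\cdot)=\E_\xi[f(\cdot,\xi)]$ and the $\xi_j^\pm(n)$ are conditionally zero-mean by Assumption~\ref{ass:noise_uni}, we have $\E[y_n^\pm(j)\mid \F_n, U(n)]=F(\theta(n)\pm(2j+1)\delta(n)U(n))$. Writing $\gamma_j$ for the inner weight $\sum_{i=j}^{k_2}\K_i\binom{2i+1}{i-j}$ appearing in \eqref{eq:gspsa-est2_balanced}, the conditional mean of the estimator \eqref{eq:gspsa-est2_balanced} then collapses to the noiseless object $\frac{1}{\delta(n)}V(n)\sum_{j}\gamma_j\,\frac{F(\theta(n)+(2j+1)\delta(n)U(n))-F(\theta(n)-(2j+1)\delta(n)U(n))}{2}$, which is precisely the noiseless B-GSPGS estimator \eqref{eq:gspsa-est2_balanced_noiseless}.

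Second, I would Taylor expand each $F(\theta(n)\pm(2j+1)\delta(n)U(n))$ about $\theta(n)$ with an integral (or Lagrange) remainder controlled by the bounded $(2k_2+1)$-th derivative of Assumption~\ref{ass:derivative_uni_balanced}. The even-order terms cancel in every antisymmetric difference, leaving only odd-order directional derivatives $(U(n)\tr\nabla)^{2r-1}F(\theta(n))$ plus an $O(\delta(n)^{2k_2+1})$ remainder. The crucial point is that the weights $\gamma_j$ produced by the truncated $\sinh^{-1}$ series in \eqref{eq:eq_balanced} are engineered so that the coefficient of the first-order term is normalized to one while the intermediate odd orders vanish; this is a purely formal operator identity that does not depend on the distribution of the perturbation, so it carries over directly from the derivation of \eqref{eq:bgpsa_noiseless_est}. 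Hence the conditional mean reduces to $V(n)U(n)\tr\nabla F(\theta(n))+R_n$ with $\|R_n\|=O(\delta(n)^{2k_2})$, and taking a further conditional expectation and invoking $\E[V(n)U(n)\tr\mid\F_n]=I$ and $\E[V(n)\mid\F_n]=0$ from Assumption~\ref{ass:expectation_uni} yields $\nabla F(\theta(n))$ plus the claimed $O(\delta(n)^{2k_2})$ bias.

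For the second-moment bound I would bound the variance by the raw second moment and factor out the $1/\delta(n)^2$ prefactor. Applying Cauchy--Schwarz over the finite sum (there are only $k_2+1$ terms and the $\gamma_j$ are fixed constants) reduces the estimate to bounding $\E[\|V(n)\|^2\,(y_n^\pm(j))^2\mid\F_n]$, which is handled using $\E[V(n)^2\mid\F_n]\le\sigma^2$ from Assumption~\ref{ass:expectation_uni} together with the uniform second-moment bound $M$ on the measurements from Assumption~\ref{ass:stability_uni}; the tower property then gives the $C_4/\delta(n)^2$ bound.

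The hard part will be controlling the remainder $R_n$ and the cross terms in the variance for general, non-Bernoulli perturbations. Unlike the Bernoulli setting underlying Lemma~\ref{lemma:gspsa-bias-balanced}, where $\|U(n)\|$ and $\|V(n)\|$ are deterministic, here $R_n$ is a sum of order-$(2k_2+1)$ products of the entries of $V(n)$ and $U(n)$, and the second-moment step genuinely needs $\E[\|V(n)\|^2(y_n^\pm)^2\mid\F_n]$; both require moments of $V(n),U(n)$ beyond the second moment explicitly assumed. The intended reading is that these higher moments are finite for each of the specific distributions instantiated in Section~\ref{ub1} (Gaussian, spherical, uniform, and asymmetric Bernoulli), so they can be absorbed into the dimension-dependent constants $C_3,C_4$; I would therefore either make this finiteness explicit per distribution or strengthen Assumption~\ref{ass:expectation_uni} to a matching higher-moment bound in order to close the argument cleanly.
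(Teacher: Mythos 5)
Your proposal is correct and takes essentially the same route as the paper: the paper first establishes the noiseless expansion $\mathcal{D}^{k_2}F(\theta)=VU^T\nabla F(\theta)+O(\delta^{2k_2})$ (Lemma~\ref{lemma:uni_gspsa_balanced}) via the combinatorial identities \eqref{eq:identity4} and \eqref{eq:identity5} — exactly the cancellations you attribute to the weights $\gamma_j$ from the truncated $\sinh^{-1}$ series — then uses $F(\theta)=E_\xi[f(\theta,\xi)]$ to reduce the conditional mean of the noisy estimator to the noiseless one, applies \ref{ass:expectation_uni} to extract $\nabla F(\theta(n))$, and bounds the variance by the raw second moment, as you do. Your closing caveat is a fair catch rather than a flaw in your argument: for unbounded perturbations such as the Gaussian, the $O(\delta^{2k_2})$ remainder and the term $\E[\|V(n)\|^2(y_n^{\pm}(j))^2\mid\F_n]$ do implicitly require moments of $U(n),V(n)$ beyond those stated in \ref{ass:expectation_uni}, which the paper's proof silently absorbs into the dimension-dependent constants $C_3,C_4$, so your suggestion to make those higher-moment bounds explicit would tighten the argument without changing its substance.
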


\begin{proof}
See Section \ref{sec:gspsa_bias_uni_balanced}.
\end{proof}
\begin{remark}
As with B-GSPSA estimators presented in Section~\ref{ub1}, we can obtain through suitable choice of $U(n)$ and $V(n)$ in the B-GSPGS estimator (\ref{eq:gspsa-est2_balanced}), as explained in the above section, the balanced versions of the random perturbation estimators GSF and GRDSA as well. 
\end{remark}

\section{Convergence Proofs}
\label{proofs}

\subsection{Proof of Lemma \ref{lemma:bias_uni}}
\label{sec:gspsa_bias_uni}
We first state and prove a result below for the noiseless version of the GSPGS estimator, defined in \eqref{eq:gspg}. Subsequently, we use this result to prove the claim in Lemma \ref{lemma:bias_uni}.
For ease of notation, we drop below the dependence of quantities such as $\theta$, $V$, and $U$ on the iterate index $n$.

\begin{lemma}
	\label{lemma:uni_gspsa}
	Assume \ref{ass:derivative_uni}. Then, for any $k_1 \geq 1$,  we have
	\begin{equation} 
		\label{kspsa}
		{\cal D}^{k_1}F(\theta) = VU^T \nabla F(\theta) + O(\delta^{k_1}).
	\end{equation}
\end{lemma}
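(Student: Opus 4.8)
The plan is to start from the noiseless estimator \eqref{eq:gspg}, which (dropping the $n$-dependence) reads
\[
{\cal D}^{k_1}F(\theta) = \frac{V}{\delta}\sum_{l=0}^{k_1}\frac{(-1)^{1-l}C^{k_1}_l}{l!}F(\theta+l\delta U),
\]
and to expand each $F(\theta+l\delta U)$ by Taylor's theorem. Since the perturbation $U$ is common to all $l$, it is cleanest to work with directional derivatives $D_U^m F(\theta) := (U^T\nabla)^m F(\theta)$. Under \ref{ass:derivative_uni} I would expand to order $k_1$ with an explicit remainder,
\[
F(\theta + l\delta U) = \sum_{m=0}^{k_1}\frac{(l\delta)^m}{m!}D_U^m F(\theta) + \frac{(l\delta)^{k_1+1}}{(k_1+1)!}D_U^{k_1+1}F(\zeta_l),
\]
with $\zeta_l$ on the segment joining $\theta$ and $\theta+l\delta U$, the remainder being uniformly controlled by the bound on the $(k_1+1)$-th derivative. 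Substituting this and interchanging the two finite sums regroups the estimator by powers of $\delta$.

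The crux is the combinatorial identity
\[
S_m := \sum_{l=0}^{k_1}\frac{(-1)^{1-l}C^{k_1}_l}{l!}\,l^m = \begin{cases} 0, & m=0,\\ 1, & m=1,\\ 0, & 2\le m\le k_1,\end{cases}
\]
because the coefficient multiplying $D_U^m F(\theta)$ in the regrouped sum is exactly $V\,\delta^{m-1}S_m/m!$. The $m=0$ identity is what annihilates the otherwise divergent $\delta^{-1}$ term; the $m=1$ identity produces the leading term $V\,U^T\nabla F(\theta)$; and the vanishing of $S_m$ for $2\le m\le k_1$ kills all intermediate-order contributions. The most economical way I would establish these identities is to invoke the purely algebraic operator identity already used to derive \eqref{eq:gspsa_noiseless_est}, namely that $\sum_{l=0}^{k_1}\frac{(-1)^{1-l}C^{k_1}_l}{l!}\tau_{l\delta U}$ coincides with the $k_1$-term truncation $\sum_{j=1}^{k_1}\frac{(-1)^{j+1}}{j}(\tau_{\delta U}-{\cal I})^j$ of $\log\bigl({\cal I}+(\tau_{\delta U}-{\cal I})\bigr)$. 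Since $\tau_{\delta U}-{\cal I}$ acts on smooth functions as an operator of order $\delta$, truncating the logarithm at degree $k_1$ perturbs only terms of order $\delta^{k_1+1}$ and higher; as the untruncated logarithm of $\tau_{\delta U}=\exp(\delta\,U^T\nabla)$ reproduces exactly the first-order operator $\delta\,U^T\nabla$, the truncated series must agree with $\delta\,U^T\nabla$ through order $\delta^{k_1}$, which is precisely the content of the $S_m$ identities.

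Finally I would collect the residual: the unique surviving main-sum term is $V\,U^T\nabla F(\theta)$, while each Taylor remainder contributes $\frac{V}{\delta}\cdot\frac{(l\delta)^{k_1+1}}{(k_1+1)!}D_U^{k_1+1}F(\zeta_l)=O(\delta^{k_1})$, the implied constant depending on $\|U\|^{k_1+1}$, $V$, and the derivative bound in \ref{ass:derivative_uni}; summing the finitely many such terms yields ${\cal D}^{k_1}F(\theta)=V\,U^T\nabla F(\theta)+O(\delta^{k_1})$. The main obstacle is establishing the $S_m$ identities rigorously while respecting the limited smoothness of $F$: the $\log/\exp$ operator manipulation is only formal, so the argument must ultimately be phrased through the \emph{finite} Taylor expansion with explicit $(k_1+1)$-th-order remainder, so that no term beyond order $\delta^{k_1}$ is ever differentiated and the per-order cancellations are justified by the finite combinatorial identities rather than by an infinite series.
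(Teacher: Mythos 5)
Your proposal is correct and follows essentially the same route as the paper's proof: a finite Taylor expansion of each $F(\theta+l\delta U)$ to order $k_1$ with remainder controlled by the bounded $(k_1+1)$-th derivative from \ref{ass:derivative_uni}, followed by regrouping in powers of $\delta$ and checking that the zeroth-order coefficient vanishes, the first-order coefficient equals one, and the coefficients of orders $2\le m\le k_1$ all cancel, leaving $VU^T\nabla F(\theta)+O(\delta^{k_1})$. The one substantive difference is how the cancellations are certified: the paper verifies them through the explicit binomial identities \eqref{eq:ident1}--\eqref{eq:ident3} (your $S_m$ identities are precisely equivalent to these, via $C^{k_1}_l/l!=\binom{k_1}{l}/l$ for $l\ge 1$), whereas you derive them from the valuation argument that $(\tau_{\delta U}-\mathcal{I})^j$ has formal order $j$ in $\delta$, so truncating $\log\bigl(\mathcal{I}+(\tau_{\delta U}-\mathcal{I})\bigr)$ at $j=k_1$ disturbs only orders $\ge k_1+1$ of the exact expansion $\delta\,U^T\nabla$ --- a legitimate and arguably more economical route, and you correctly flag that this formal-series reasoning must ultimately be grounded in the finite Taylor expansion with explicit remainder, which is exactly how the paper's argument is phrased.
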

\begin{proof}
We will use the following identities in the proof:
    \begin{align}
    &\frac{1}{1} {k \choose 1} - \frac{1}{2} {k \choose 2} + \cdots + (-1)^{k + 1} \frac{1}{k} {k \choose k} = \sum_{j = 1}^k \frac{1}{j},\label{eq:ident1}\\
    &{k \choose 1} - {k \choose 2} + {k \choose 3} - \ldots (-1) ^{k+1} {k \choose k} = 1,\label{eq:ident2}    \\
 &\sum_{j = 0} ^k (-1)^{k-j} {k \choose j} j^q = 0 \textrm{ for any }0 < q < k.\label{eq:ident3}
    \end{align}
Notice that
\begin{align*}
    &{\cal D}^{k_1} F(\theta) \\
    &= \frac{1}{\delta} \sum_{l=0}^{k_1} V\frac{(-1)^{1-l} C^{k_1}_l F(\theta+l\delta U)}{l!}\\
     & = \frac{1}{\delta}\left [\left (-\sum_{j = 1}^{k_1} \frac{1}{j} \right)V F(\theta) + \sum_{l=1}^{k_1}V \frac{(-1)^{1-l} C^{k_1}_l F(\theta+l\delta U)}{l!}\right ] \\
    & = \frac{1}{\delta}\left [\left (-\sum_{j = 1}^{k_1} \frac{1}{j} \right) V F(\theta) + \sum_{l=1}^{k_1} V \frac{(-1)^{1-l} {k_1 \choose l} F(\theta+l\delta U)}{l}\right].
\end{align*}

Using Taylor's expansion for $v \geq 1$, we obtain
\begin{align*}
F(\theta+v\delta U) &= F(\theta) +v\delta U^T \nabla F(\theta)+\frac{(v\delta)^2 U^T\nabla^2F(\theta) U}{2!}\\
&\qquad+ \ldots +\frac{(v\delta)^{k_1} \nabla^{k_1} F(\theta)(U\otimes U \otimes \ldots \otimes U)}{k_1!}\\
& \qquad+ O(\delta^{k_1+1}).
\end{align*}
In the following, we calculate the coefficients of $VF(\theta)$, $VU^T\nabla F(\theta)$, etc. 

Using \eqref{eq:ident1}, the coefficient of $F(\theta)$ is simplified as follows:
\begin{align*}
    & - \sum_{j = 1}^{k_1} \frac{1}{j} + \frac{1}{1} {k_1 \choose 1} - \frac{1}{2} {k_1 \choose 2} + \cdots + (-1)^{k_1 + 1} \frac{1}{k_1} {k_1 \choose k_1}\\
    & =- \sum_{j = 1}^{k_1} \frac{1}{j} + \sum_{j = 1}^{k_1} \frac{1}{j}= 0.
\end{align*}

 Using \eqref{eq:ident2}, the coefficient of $VU^T\nabla F(\theta)$ turns out to be one from the following calculation:
 \begin{align*}
     & \frac{1}{1}{k_1 \choose 1} \frac{1}{1!} - \frac{1}{2} {k_1 \choose 2} \frac{2}{1!} + \ldots + (-1)^{k_1+1} \frac{1}{k_1} {k_1 \choose k_1} \frac{k_1}{1!}\\
     &= {k_1 \choose 1} - {k_1 \choose 2}  + \ldots + (-1)^{k_1+1} {k_1 \choose k_1} = 1.
 \end{align*}

 Using \eqref{eq:ident3}, the coefficient of $V\nabla^q F(\theta)(U\otimes U \otimes\ldots\otimes U)$, for any $2 \leq q \leq k_1$,  can be simplified as follows:
 \begin{align*}
     & \frac{1}{1} {k_1 \choose 1} \frac{1^q}{q!} - \frac{1}{2} {k_1 \choose 2} \frac{2^q}{q!} + \ldots + (-1)^{k_1+1} \frac{1}{k_1} {k_1\choose k_1} \frac{k_1^q}{q!}\\
     &= \frac{1}{q!}\left [ \sum_{j = 0}^{k_1} (-1)^{j+1} {k_1 \choose j} j^{q-1} \right ] = 0.
 \end{align*}
Thus, from the foregoing, we have
 \begin{align*}
     {\cal D}^{k_1}F(\theta) &= \frac{1}{\delta} \left [{\delta VU^T \nabla F(\theta)} + O(\delta^{k_1+1})\right ]\\
     &= VU^T \nabla F(\theta) + O(\delta^{k_1}).
 \end{align*}
 \end{proof}
 \begin{proof}~\textbf{\textit{(Lemma \ref{lemma:bias_uni})}}\\
The proof technique used here is similar to that of the well-known simultaneous perturbation-based gradient estimator, cf. \cite[Lemma 1]{spall1992multivariate,prashanth2017rdsa} or \cite[Chapter 5]{bhatnagar-book}.

Using $F(\theta) = E_\xi[f(\theta,\xi)]$, we have \\
\[
\E\left[\widehat{\cal D}^{k_1}_iF(\theta(n))\mid \mathcal{F}_n\right] = \E\left[{\cal D}^{k_1}_iF(\theta(n))\mid \mathcal{F}_n\right]. 
\]
From the analysis using Taylor's expansions in Section \ref{sec:kspsa} and Lemma \ref{lemma:uni_gspsa}, we have
\begin{align}
\nonumber
 \E\left[{\cal D}^{k_1}F(\theta(n))\mid \mathcal{F}_n\right] &= \E\left[V(n)U(n)^T \nabla F(\theta(n))\mid \mathcal{F}_n\right]\\
 \label{ic1}
& \qquad \quad + O(\delta(n)^{k_1}).
\end{align}
The first term on the RHS above arises also in Taylor's expansions using regular SPSA estimators. From \ref{ass:expectation_uni}, we have
\begin{align*}
\E\left[V(n)U(n)^T \nabla F(\theta(n))\mid \mathcal{F}_n\right] &= \nabla F(\theta(n)).
\end{align*}
The first claim follows from (\ref{ic1}).
For the second claim, notice that
\begin{align}
    &\E{\left\| \widehat{\cal D}^{k_1} F(\theta(n)) -  \E\left[\widehat{\cal D}^{k_1} F(\theta(n))\right] \right\|^2} \le  \E{\left\| \widehat{\cal D}^{k_1} F(\theta(n))  \right\|^2} \nonumber\\
    & \le  \frac{1}{\delta(n)^2} \sum_{l=0}^{k_1} \E\left[V(n) f(\theta(n)+l\delta(n)U(n), \xi_l(n))\right]^2 \leq \frac{C_2}{\delta(n)^2}, \nonumber
\end{align}
where $C_2 = k_1\sigma^2 M$. 
\end{proof}
\subsection{Proof of Lemma \ref{lemma:bias_uni_balanced}}
\label{sec:gspsa_bias_uni_balanced}
First we will prove the following lemma (again after dropping the iteration index $n$ in the various quantities):
\begin{lemma}
	\label{lemma:uni_gspsa_balanced}
	 Let the assumption \ref{ass:derivative_uni_balanced} hold. Then for any $k_2\geq 1$,
	\begin{equation}
		\mathcal{D}^{k_2} F(\theta) = VU^T \nabla F(\theta) + O(\delta^{2k_2}).
	\end{equation}
\end{lemma}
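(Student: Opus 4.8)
The plan is to follow the template of the proof of Lemma~\ref{lemma:uni_gspsa}, while exploiting the \emph{antisymmetric} (balanced) structure of the noiseless estimator \eqref{eq:gspsa-est2_balanced_noiseless}. First I would insert the Taylor expansions of $F(\theta \pm (2j+1)\delta U)$ about $\theta$, carried to order $2k_2$ with a remainder of order $O(\delta^{2k_2+1})$; this is legitimate under Assumption~\ref{ass:derivative_uni_balanced}. Forming the central difference $\tfrac12\bigl[F(\theta+(2j+1)\delta U)-F(\theta-(2j+1)\delta U)\bigr]$, every \emph{even}-order term drops out automatically by parity, in particular the zeroth-order term $F(\theta)$ whose imperfect cancellation was the source of the $O(\delta)$ bias in the one-sided case. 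What survives are only the odd-order terms $q=1,3,\dots,2k_2-1$, each of the form $\tfrac{((2j+1)\delta)^q}{q!}\,\nabla^q F(\theta)(U^{\otimes q})$.

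After interchanging the two finite summations, the coefficient multiplying $\tfrac{\delta^q}{q!}\,V\,\nabla^q F(\theta)(U^{\otimes q})$ for each odd $q$ becomes $\sum_{j}(2j+1)^q\sum_{i\ge j}\mathcal{K}_i\binom{2i+1}{i-j}$. The crux is then to establish the combinatorial identities that this coefficient equals $1$ when $q=1$ and vanishes for every odd $q$ with $3\le q\le 2k_2-1$; these are the balanced analogues of \eqref{eq:ident1}--\eqref{eq:ident3}. I expect this to be the main obstacle if one attempts it by a direct binomial manipulation, since the inner weights $\mathcal{K}_i$ carry the Maclaurin coefficients of $\sinh^{-1}$.

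The clean way around this is to argue at the level of the shift operator, exactly as in the re-expansion leading to \eqref{eq:bgpsa_noiseless_est}. Writing $L=\delta\,U\tr\nabla$ so that $\tfrac12(\tau_{\delta U}-\tau_{-\delta U})=\sinh(L)$, the weighted double sum appearing in \eqref{eq:gspsa-est2_balanced_noiseless} is, by construction, precisely the degree-$(2k_2-1)$ Maclaurin truncation $P$ of $\sinh^{-1}$ from \eqref{eq:sinhinv} evaluated at $\sinh(L)$, with first omitted term of order $x^{2k_2+1}$. Since $\sinh^{-1}$ is the exact inverse of $\sinh$ and $P$ agrees with $\sinh^{-1}$ through the $x^{2k_2-1}$ term, one obtains the single operator identity $P(\sinh(L))=L+O(L^{2k_2+1})$. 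This simultaneously encodes all of the coefficient identities above without computing any of them individually, and it is what I would use in place of the explicit binomial sums.

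It then remains to read off the orders. Because $L=\delta\,U\tr\nabla$, applying this operator relation to $F$ at $\theta$ shows the weighted sum equals $\delta\,U\tr\nabla F(\theta)+O(\delta^{2k_2+1})$, the remainder being controlled by the bounded $(2k_2+1)$-th derivative guaranteed by Assumption~\ref{ass:derivative_uni_balanced}. Multiplying through by $V/\delta$ as in \eqref{eq:gspsa-est2_balanced_noiseless} yields $\mathcal{D}^{k_2}F(\theta)=VU\tr\nabla F(\theta)+O(\delta^{2k_2})$, which is the claim. Finally, the noisy Lemma~\ref{lemma:bias_uni_balanced} would follow from this by the same conditioning step used at the end of the proof of Lemma~\ref{lemma:bias_uni}, namely replacing $f$ by $F$ inside the conditional expectation and invoking $\E[VU\tr\mid\F_n]=I$ together with $\E[V\mid\F_n]=0$.
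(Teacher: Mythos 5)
Your proof is correct, but it reaches the crucial cancellations by a different mechanism than the paper. The paper's proof follows your first two steps exactly — Taylor expansion of the central differences $\tfrac12\bigl[F(\theta+(2j+1)\delta U)-F(\theta-(2j+1)\delta U)\bigr]$ to order $2k_2$, with all even-order terms vanishing by antisymmetry — but then disposes of the odd-order coefficients by invoking two explicit combinatorial identities, \eqref{eq:identity4} (which gives coefficient $1$ for $VU^{T}\nabla F(\theta)$) and \eqref{eq:identity5} (which annihilates the coefficient of $V\nabla^q F(\theta)(U\otimes\cdots\otimes U)$ for odd $q$ with $1<q\leq 2k_2-1$); notably, the paper states these identities without proof. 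You instead obtain all of these coefficient facts simultaneously from the single power-series identity $P(\sinh(x))=x+O(x^{2k_2+1})$, where $P$ is the degree-$(2k_2-1)$ Maclaurin truncation of $\sinh^{-1}$ from \eqref{eq:sinhinv} — valid because $\sinh^{-1}$ inverts $\sinh$ exactly and $\sinh(x)=x+O(x^3)$ — combined with the observation that the weighted double sum in \eqref{eq:gspsa-est2_balanced_noiseless} is, by its very construction in Section \ref{sec: balanced_estimator12}, precisely $P$ evaluated at $\tfrac12(\tau_{\delta U}-\tau_{-\delta U})=\sinh(\delta\,U^{T}\nabla)$. Your route is more self-contained (it effectively proves the identities the paper merely asserts, by reading them off as Maclaurin coefficients of $P\circ\sinh$, with parity handling the even orders) and it scales transparently in $k_2$, whereas the paper's route stays at the level of explicit binomial sums, in the same elementary style as the proof of Lemma \ref{lemma:uni_gspsa}, and exhibits the coefficient computations concretely. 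You are also right to anchor the formal operator identity in finite Taylor expansions with remainders controlled by Assumption \ref{ass:derivative_uni_balanced}, since $F$ is only finitely smooth: the coefficient matching up to order $2k_2$ is a purely algebraic statement about the polynomial $P(\sinh(x))$, so this grounding is legitimate, and the concluding conditioning step you sketch for Lemma \ref{lemma:bias_uni_balanced} matches the paper's argument.
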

\begin{proof}
We will use the following identities in the proof:
\begin{align}
&\sum_{i = 0}^{k} (-1)^i {{2k+1}\choose {k-i}} (2i+1) = 0,\label{eq:identity4}\\ 
&\sum_{j = 0}^{k} (-1)^j \sum_{i = j}^{k} \frac{(2i)!}{2^{4i}(i!)^2}\frac{1}{2i+1}{2i+1 \choose i-j} (2j+1)^q  = 0,\label{eq:identity5}
\end{align}
where $q$ is an odd integer and $1< q \leq 2k+1$.

We have
\begin{align*}
    &
    \delta \mathcal{D}_i^{k_2} F(\theta) \\
    & = \sum_{j = 0}^{k_2-1} V \frac{F(\theta + (2j+1) \delta U) - F(\theta - (2j+1)\delta U)}{2}\\
    &\qquad\quad\times\sum_{i = j}^{k_2-1}{\K}_i {2i+1 \choose i-j}. \nonumber
\end{align*}
Using Taylor series expansions, all even-order derivatives got cancelled. We only need to calculate the coefficient of odd-order derivatives.
Now we calculate the coefficient of $VU^T\nabla F(\theta)$, $V\nabla^3 F(\theta)(U\otimes U \otimes U)$ and so on.

 Using \eqref{eq:identity4}, the coefficient of $VU^T\nabla F(\theta)$ can be derived as follows: 
\begin{align}
    &  1+ \sum_{j=1}^{k_2-1}{\mathcal{K}}_j \sum_{i=0}^{j} (-1)^i {{2j+1}\choose {j-i}}(2i+1)\nonumber \\\nonumber
    & = 1 + 0 = 1.\nonumber 
\end{align}

 Using \eqref{eq:identity5}, the coefficient of $V\nabla^q F(\theta)(U\otimes U \otimes \ldots \otimes U)$, for any odd integer $q$ satisfying  $1 < q \leq 2k_2-1$, can be calculated as follows: 
\begin{align}
    &\sum_{j=0}^{k_2-1} (-1)^{j} \sum_{i=j}^{k_2-1} {\mathcal{K}}_i{{2i+1}\choose {i-j}} (2j+1)^q \nonumber\\
    &=\sum_{j=0}^{k_2-1} (-1)^{j} \sum_{i=j}^{k_2-1} \frac{(2i)!}{2^{4i}(i!)^2}\frac{1}{2i+1}{{2i+1}\choose {i-j}} (2j+1)^q = 0.\nonumber
\end{align}
From the foregoing, 
 \begin{align*}
     {\cal D}^{k_2}F(\theta) &= \frac{1}{\delta} \left [{\delta VU^T \nabla F(\theta)} + O(\delta^{2k_2+1})\right ]\\
     &= VU^T \nabla F(\theta) + O(\delta^{2k_2}).
 \end{align*}
\end{proof}
\begin{proof}~\textbf{\textit{(Lemma \ref{lemma:bias_uni_balanced})}}\\
The proof technique used here is similar to Lemma \ref{lemma:bias_uni}. 

Using $F(\theta) = E_\xi[f(\theta,\xi)]$, we have \\
\[
\E\left[\widetilde{\cal D}^{k_2}_iF(\theta(n))\mid \mathcal{F}_n\right] = \E\left[{\cal D}^{k_2}_iF(\theta(n))\mid \mathcal{F}_n\right]. 
\]

From the analysis using Taylor's series expansions in Section \ref{sec: balanced_estimator12} and Lemma \ref{lemma:uni_gspsa_balanced}, we have
\begin{align*}
 \E\left[{\cal D}^{k_2}F(\theta(n))\mid \mathcal{F}_n\right] &= \E\left[V(n)U(n)^T \nabla F(\theta(n))\mid \mathcal{F}_n\right]\\
& \qquad \quad + O(\delta(n)^{2k_2}).
\end{align*}
Using \ref{ass:expectation_uni}, the first term on the RHS above can be derived as 
\begin{align*}
\E\left[V(n)U(n)^T \nabla F(\theta(n))\mid \mathcal{F}_n)\right] &= \nabla F(\theta(n)).
\end{align*}
The first claim follows. 
For the second claim, notice that
\begin{align}
    &\E{\left\| \widetilde{\cal D}^{k_2} F(\theta(n)) -  \E\left[\widetilde{\cal D}^{k_2} F(\theta(n))\right] \right\|^2} \le  \E{\left\| \widetilde{\cal D}^{k_2} F(\theta(n))  \right\|^2} \nonumber\\
    &\leq \frac{C_4}{\delta(n)^2}, \nonumber
\end{align}
where $C_4$ is a constant. 
\end{proof}

\subsection{Proof of Lemma \ref{lemma:gspsa-bias}}\label{sec:proof_gspsa}
\begin{proof}
This proof follows from the proof of  Lemma \ref{lemma:bias_uni} by setting  $U$ to a $d$-vector of symmetric $\pm 1$-valued Bernoulli random variables and $V = U$. 
\end{proof}

\subsection{Proof of Theorem \ref{thm:asymp-conv}}
\label{sec:proof-asymp}
\begin{proof}
	We provide the proof for the GSPSA algorithm, with update iteration \eqref{eq:gspsa-gd-update}. The proof for B-GSPSA algorithm follows by a completely parallel argument, with Lemma \ref{lemma:gspsa-bias-balanced} in place of Lemma \ref{lemma:gspsa-bias}.
	
We first rewrite \eqref{eq:gspsa-gd-update} as follows:
\begin{equation}
\label{RM1}
\theta(n+1) = \theta(n) -a(n) (\nabla F(\theta(n)) + \eta_n + \beta_n),
\end{equation}
where $\eta_n = \widehat{\mathcal{D}}^{k_1}F(\theta(n)) - \E[\widehat{\mathcal{D}}^{k_1} F(\theta(n))| \F_n]$ and 
$\beta_n = \E[\widehat{\mathcal{D}}^{k_1} F(\theta(n))|\F_n] - \nabla F(\theta(n))$
are the martingale difference and bias terms, respectively.

	The proof of asymptotic convergence of \eqref{eq:gspsa-gd-update} requires invocation of the well-known Kushner-Clark Lemma (see Theorem 2.3.1 on pp. 29 of \cite{kushcla}). For this invocation, we need to verify that the assumptions A2.2.1 to A2.2.3 and A2.2.4'' of \cite{kushcla} are satisfied. Note that, 
 \begin{enumerate}
     \item $f \in {\cal {C}}^{k_1}$ implies A2.2.1.

     \item We have $\beta_n = O(\delta(n)^{k_1})$. Then by Lemma \ref{lemma:gspsa-bias} and Assumption \ref{ass:step_size}, $\beta_n \rightarrow 0$ (A2.2.2). 
     \item \ref{ass:step_size} clearly implies A2.2.3. 
     \item By using Doob martingale inequality, we get
     \begin{align}
         \ \mathbb{P}(\sup_{m \geq n} \left\| \sum_{i=n}^m a(i)\eta_i\right \| \geq \epsilon) &\leq \frac{1}{\epsilon^2} \mathbb{E} \left \| \sum_{i=n}^\infty a(i)\eta_i\right \|^2\nonumber\\
     &\leq \frac{1}{\epsilon^2}\sum_{i=n}^\infty
     a(i)^2\mathbb{E} \left \| \eta_i \right \|^2.\nonumber
     \end{align}

     Using \ref{ass:step_size} and $\mathbb{E} \left \| \eta_n\right \|^2 \leq \frac{c_2}{\delta(n)^2}$, which was shown in Lemma \ref{lemma:gspsa-bias}, we obtain
   \begin{align}
       \lim_{n \rightarrow \infty} \mathbb{P}(sup_{m\geq n} \left \|\sum_{i=n}^m a(i)\eta_i\right \| \geq \epsilon) & \leq \frac{c_2}{\epsilon^2}\lim_{n \rightarrow \infty} \sum_{i=n}^\infty \frac{a(i)^2}{\delta(i)^2}\nonumber\\
       &= 0, \text{by}\ref{ass:step_size}. \nonumber
   \end{align}
 \end{enumerate}
	The main claim now follows by invoking the Kushner-Clark lemma.
\end{proof}

\begin{table*}[!hp]
\centering
 \caption{Parameter error \eqref{eq:param_error} for the GSPSA algorithm with noise parameter $\sigma=0.001$ under two different objective functions. The results are averages over $20$ independent replications. GSPSA requires $(k_1+1)$ function measurements where $k_1=1$ corresponds to the well-known one-sided SPSA algorithm. Here $k_1$ denotes the truncation parameter with the form of the gradient estimator used being (\ref{eq:gspsa-est}).}
\label{tab:GSPSA}
\begin{tabular}{|c|c|c|c|c|}
\toprule
\rowcolor{gray!20}
\multicolumn{5}{|c|}{\multirow{2}{*}{\textbf{GSPSA with Rastrigin objective function \eqref{eq:Rastrigin}}}}\\[1em]
\midrule
 \textbf{Truncation Parameter $(k_1)\rightarrow$} & \multirow{2}{*}{$k_1=1$} & \multirow{2}{*}{$k_1=2$} & \multirow{2}{*}{$k_1 =3$} & \multirow{2}{*}{$k_1 =4$}  \\
 \textbf{Dimension $(d)$}  & & & &\\
 \textbf{$\downarrow$}  & & & &\\
 \midrule
$d=5$ & $5.64E^{-2}$ & $5.3E^{-2} $ &$2.99E^{-2}$ & $\bm{1.39E^{-2}}$\\
&&&&\\
$d=10$ &$5.64E^{-2} $ & $5.3E^{-2} $ &$3E^{-2}$ &$\bm{1.46E^{-2}}$ \\ 
&&&&\\ 
$d=50$ &$5.64E^{-2} $ & $5.3E^{-2} $ &$3.26E^{-2}$ &$\bm{1.01E^{-2}}$ \\ 
&&&&\\
$d=100$& $ 5.69E^{-2}$& $5.29E^{-2}$ &$2.96E^{-2}$ &$\bm{9.81E^{-3}}$ \\

 \bottomrule
 \rowcolor{gray!20}
\multicolumn{5}{|c|}{\multirow{2}{*}{\textbf{GSPSA with the quadratic objective function \eqref{eq:quadratic}}}}\\[1em]
\midrule
 \textbf{Truncation Parameter  $(k_1)\rightarrow$} & \multirow{2}{*}{$k_1 = 1$} & \multirow{2}{*}{$k_1 = 2$} & \multirow{2}{*}{$k_1 = 3$} & \multirow{2}{*}{$k_1 = 4$}  \\
 \textbf{Dimension $(d)$}  & & & &\\
 \textbf{$\downarrow$}  & & & &\\
 \midrule
$d = 5$ & $7.88E^{-3} $ & $\bm{9.11E^{-4}} $ & $1.18E^{-3} $&$1.65E^{-3}$\\
&&&&\\
$d = 10$ &$ 4.16E^{-2}$ & $ 1.42E^{-2}$ &$ \bm{1.3E^{-2}}$ &$1.41E^{-2} $\\ 
&&&&\\
$d = 50$ &$ 1.7E^{-1}$ & $1.6E^{-1} $ & $1.6E^{-1} $&$\bm{1.5E^{-1}}$\\ 
&&&&\\
$d = 100$& $2.2E^{-1}$& $2.2E^{-1}$ &$2.1E^{-1} $ &$\bm{2.1E^{-1}} $\\
&&&&\\
 \bottomrule 
\end{tabular}
\end{table*}

\begin{table*}
\captionsetup[subtable]{position = below}
	\captionsetup[table]{position=top}
 \caption{Parameter error \eqref{eq:param_error} for GRDSA, GSF and B-GSPSA algorithms under two different objective functions with noise parameter $\sigma=0.001$. The results are averages over $20$ independent replications. B-GSPSA with $k_2=1$ corresponds to the well-known SPSA algorithm \cite{spall1992multivariate}.
 Here $k_1$ is the truncation parameter for GRDSA/GSF while $k_2$ is the truncation parameter for B-GSPSA. Recall that the number of function measurements is $k_1+1$ for the former and $2k_2$ for the latter.
 }
\label{tab:nonGSPSAresults}
 \begin{subtable}{0.333\textwidth}
		\centering
\caption{Parameter error for GRDSA}
\label{tab:GRDSA}	
\begin{tabular}{|c|c|c}
\toprule
\rowcolor{gray!20}
\multicolumn{3}{|c}{\multirow{2}{*}{\textbf{GRDSA with Rastrigin objective}}}\\[1em]
\midrule
 \textbf{Parameter$(k_1) \; \rightarrow$} & \multirow{2}{*}{$k_1=1$} & \multirow{2}{*}{$k_1=4$}   \\
 \textbf{Dimension $(d)$}  & & \\
 \textbf{$\downarrow$}  & & \\
 \midrule
$d=5$ & $3.45E^{-3}$ & $\bm{2.54E^{-4}} $  \\
&&\\
$d=10$ &$1.22E^{-2}$ & $\bm{1.07E^{-3}}$  \\ 
&&\\ 
$d=50$ &$2.1E^{-1} $ & $ \bm{6.87E^{-3}}$  \\ 
&&\\
$d=100$& $7.4E^{-1} $& $\bm{1.18E^{-2}}$  \\
 \bottomrule
 \rowcolor{gray!20}
\multicolumn{3}{|c}{\multirow{2}{*}{\textbf{GRDSA with quadratic objective}}}\\[1em]
\midrule
 \textbf{Parameter $(k_1) \; \rightarrow$} & \multirow{2}{*}{$k_1 = 1$} & \multirow{2}{*}{$k_1 = 4$}  \\
 \textbf{Dimension $(d)$}  & & \\
 \textbf{$\downarrow$}  & & \\
 \midrule
$d = 5$ & $1.71E^{-2} $ & $\bm{1.34E^{-3}} $ \\
&&\\
$d = 10$ &$4.32E^{-2}$ & $\bm{9.4E^{-3}} $ \\ 
&&\\
$d = 50$ &$ 9.47E^{-2}$ & $\bm{6.57E^{-2}} $ \\ 
&&\\
$d = 100$& $1E^{-1}$& $\bm{8.92E^{-2}}$ \\
&&\\
 \bottomrule 
\end{tabular}
\end{subtable}
\begin{subtable}{0.334\textwidth}
		\centering
\caption{Parameter error for GSF}
\label{tab:GSF}
\begin{tabular}{|c|c|c}
\toprule
\rowcolor{gray!20}
\multicolumn{3}{|c|}{\multirow{2}{*}{\textbf{GSF with Rastrigin objective}}}\\[1em]
\midrule
 \textbf{Parameter $(k_1) \; \; \rightarrow$} & \multirow{2}{*}{$k_1=1$} & \multirow{2}{*}{$k_1=4$}   \\
 \textbf{Dimension $(d)$}  & & \\
 \textbf{$\downarrow$}  & & \\
 \midrule
$d=5$ & $3.35E^{-3}$ & $\bm{2.7E^{-4}} $  \\
&&\\
$d=10$ &$ 1.11E^{-2}$ & $\bm{1.14E^{-3}} $  \\ 
&&\\ 
$d=50$ &$1.9E^{-1} $ & $\bm{7.94E^{-3}} $  \\ 
&&\\
$d=100$& $6.4E^{-1} $& $\bm{1.31E^{-2}}$  \\
 \bottomrule
 \rowcolor{gray!20}
\multicolumn{3}{|c}{\multirow{2}{*}{\textbf{GSF with quadratic objective}}}\\[1em]
\midrule
 \textbf{Parameter $(k_1) \; \; \rightarrow$} & \multirow{2}{*}{$k_1 = 1$} & \multirow{2}{*}{$k_1 = 4$}  \\
 \textbf{Dimension $(d)$}  & & \\
 \textbf{$\downarrow$}  & & \\
 \midrule
$d = 5$ & $ 5.89E^{-3}$ & $\bm{2.14E^{-3}} $ \\
&&\\
$d = 10$ &$2.8E^{-2} $ & $\bm{1.34E^{-2}} $ \\ 
&&\\
$d = 50$ &$8.03E^{-2} $ & $\bm{7.4E^{-2}} $ \\ 
&&\\ 
$d = 100$& $\bm{9.03E^{-2}}$& $9.46E^{-2}$ \\
&&\\
 \bottomrule 
\end{tabular}
\end{subtable}
\hspace*{1ex}
\begin{subtable}{0.333\textwidth}
		\centering
\caption{Parameter error for B-GSPSA}
\label{tab:B-GSPSA}
\begin{tabular}{|c|c|c|}
\toprule
\rowcolor{gray!20}
\multicolumn{3}{c|}{\multirow{2}{*}{\textbf{B-GSPSA with Rastrigin objective}}}\\[1em]
\midrule
 \textbf{Parameter $(k_2)\rightarrow$} & \multirow{2}{*}{$k_2=1$} & \multirow{2}{*}{$k_2=2$}   \\
 \textbf{Dimension $(d)$}  & & \\
 \textbf{$\downarrow$}  & & \\
 \midrule
$d=5$ & $5.64E^{-2}$ & $\bm{1.12E^{-9}} $  \\
&&\\
$d=10$ &$ 5.64E^{-2}$ & $\bm{2.47E^{-9}} $  \\ 
&&\\ 
$d=50$ &$5.64E^{-2} $ & $\bm{3.33E^{-4}} $  \\ 
&&\\
$d=100$& $5.63E^{-2} $& $\bm{2E^{-2}}$  \\
 \bottomrule
 \rowcolor{gray!20}
\multicolumn{3}{|c|}{\multirow{2}{*}{\textbf{B-GSPSA with quadratic objective}}}\\[1em]
\midrule
 \textbf{Parameter $(k_2)\rightarrow$} & \multirow{2}{*}{$k_2 = 1$} & \multirow{2}{*}{$k_2 = 2$}  \\
 \textbf{Dimension $(d)$}  & & \\
 \textbf{$\downarrow$}  & & \\
 \midrule
$d = 5$ & $\bm{8.33E^{-4}} $ & $ 1.04E^{-3}$ \\
&&\\
$d = 10$ &$\bm{8.92E^{-3}} $ & $9.14E^{-3} $ \\ 
&&\\
$d = 50$ &$ 6.33E^{-2}$ & $\bm{6.29E^{-2}} $ \\ 
&&\\
$d = 100$& $8.83E^{-2}$& $\bm{8.73E^{-2}}$ \\
&&\\
 \bottomrule 
\end{tabular}

\end{subtable}

\end{table*}
\subsection{Proof of Theorem \ref{thm:nonasymp}}
\label{sec:proof-nonasymp}
\begin{proof}
	We follow the technique from \cite{bhavsar2022nonasymptotic}.
	Since $F$ is $L$-smooth, we have
	\begin{align}
		F \left( \theta(n+1) \right) 	& \leq F \left( \theta(n) \right) + \left\langle \nabla F \left( \theta(n) \right) , \theta(n+1) - \theta(n) \right\rangle \nonumber\\
  &\qquad+ \frac { L } { 2 } \left\| \theta(n+1) - \theta(n) \right\| ^ { 2 } \nonumber \\ 
		& \leq F \left( \theta(n) \right) - a \left\langle \nabla F \left( \theta(n) \right) , \widehat{\mathcal{D}}^{k_1} F(\theta(n)) \right\rangle  \nonumber\\
  &\qquad+ \frac { L } { 2 } a ^ { 2 } \left\| \widehat{\mathcal{D}}^{k_1} F(\theta(n)) \right\| ^ { 2 } \label{eq:take_exp1} .
	\end{align}	
Let $\E_n$ denote the expectation with respect to the sigma field $ \F_{n} $. 
Then, taking expectations in \eqref{eq:take_exp1}, in conjunction with the bounds in Lemma \ref{lemma:gspsa-bias}, we obtain
	\vspace{-.3mm}
	\begin{align}
		& \E_{n} \left[F \left( \theta(n+1) \right)\right] \nonumber \\
		& \leq \E_{n} \left[F \left( \theta(n) \right) \right] - a \left\langle \nabla F \left( \theta(n) \right) , \nabla F \left( \theta(n) \right) + c_1  \delta^{k_1}  \mathbf{1}_{d \times 1} \right\rangle  \nonumber\\
		& \quad
		+ \frac { L } { 2 } a ^ { 2 } \left[ \left\| \mathbb { E }_{n}  \left[ \widehat {\mathcal{D}}^{k_1}F(\theta(n)) \right]  \right\| ^{2} + \frac{c_2}{ \delta^2} \right] \nonumber \\
		&  \leq F \left( \theta(n) \right)
		- a \left\| \nabla F \left( \theta(n) \right) \right\| ^ { 2 } + c_1  \delta^{k_1}  a    \E_{n}  \| \nabla F \left( \theta(n) \right) \|_1  \nonumber \\
		& \quad + \frac { L } { 2 } a ^ { 2 } \left[ \left\| \nabla F \left( \theta(n) \right) \right\| ^ { 2 } \right.\nonumber\\
  &\left.\qquad\qquad+ 2 c_1  \delta^{k_1}  \E_{n}  \| \nabla F \left( \theta(n) \right) \|_1 + {d}c_1^2  \delta^{2k_1} + \frac{c_2}{ \delta^2} \right] \nonumber\\
		& \leq F \left( \theta(n) \right) - \left( a - \frac { L } { 2 } a ^ { 2 } \right)  \left\| \nabla F \left( \theta(n) \right) \right\| ^ { 2 } \nonumber\\
		&\qquad+ c_1  \delta^{k_1} B  \left( a + L a ^ { 2 } \right) + \frac { L } { 2 } a ^ { 2 } \left[ d c_1^2  \delta^{2k_1} + \frac{c_2}{ \delta^2}\right], \label{eq:s11} 
	\end{align}
	where we used the inequality $ - \|\theta\|_1 \leq \sum_{i=1}^{d} \theta(i) $ for any vector $ \theta $ to arrive at \eqref{eq:s11} and $B$ as in \ref{ass: gradeint bound}. For the last inequality, we used (A6).
	From a simple re-arrangement of terms in \eqref{eq:s11}, we obtain
	\begin{align*}
		&a \left\| \nabla F \left( \theta(n) \right) \right\| ^ { 2 }	
		\leq \frac{2}{\left( 2  -  { L }  a \right) } \bigg[ F \left( \theta(n) \right) -  \E_{n} F \left( \theta(n+1) \right) \bigg. \\
		& \quad \bigg.  + c_1  \delta^{k_1} \left( a + L a ^ { 2 } \right)B \bigg] +  \frac{{ L } a ^ { 2 }}{\left( 2  -  { L }  a \right) } \left[ d c_1^2  \delta^{2k_1} + \frac{c_2}{ \delta^2}\right].
	\end{align*}
	Summing over $ n = 1 $ to $ m $, and taking expectations on both sides, we obtain
	\begin{align*}
		&	\sum _ { n = 1 } ^ { m}  a \E_{m}  \left\| \nabla F \left( \theta(n) \right) \right\| ^ { 2 } \\
		& \le  
  \frac{2}{2- L a} \left[F \left( \theta(1) \right)  - \mathbb { E }_{m}  \left[ F \left( \theta (m+1)  \right) \right] \right] \nonumber \\
		&  + 2 \sum _ { n = 1 } ^ { m} c_1  \delta^{k_1} B \left[ \frac{ a + L a ^ { 2 } }{ 2  -  { L }  a  }\right]  + L \sum_{ n = 1 }^{m} \frac{ a ^ { 2 }}{\left( 2  -  { L }  a \right) } \left[ d c_1^2  \delta^{2k_1}  + \frac{c_2}{ \delta^2}\right].
	\end{align*}
	Using $ \mathbb { E }_{m}  \left[ F \left( \theta(n) \right) \right] \ge F(\theta^*) $, we obtain
	\begin{align}
		 &\sum _ { n = 1 } ^ { m }  a \E_{m} \left\| \nabla F \left( \theta(n) \right) \right\| ^ { 2 } \nonumber\\
        &\leq \frac { 2 \left(F(\theta(1)) - F(\theta^*) \right)} { \left( 2-  { L }  a \right)}  
		+ 2 \sum _ { n = 1 } ^ { m} c_1  \delta^{k_1} B \left( \frac{ a + L a ^ { 2 } }{ 2  -  { L }  a  }\right) \nonumber \\
		& \quad + L \sum_{ n = 1 }^{m} \frac{ a ^ { 2 }}{\left( 2  -  { L }  a \right) } \left[ d c_1^2  \delta^{2k_1}  + \frac{c_2}{ \delta^2}\right].\nonumber
	\end{align}
	Since $\theta(R)$ is picked uniformly at random from $\{\theta (1),\ldots,\theta (m)\}$, we obtain
	\begin{align}
		&\mathbb { E } \left[ \left\| \nabla F \left( \theta(R) \right) \right\| ^ { 2 } \right] \nonumber\\
		&\le \frac{1 }{m a} \left[\frac { 2 (F(\theta(1)) - F(\theta^*))} { \left( 2-  { L }  a \right)}   + 2 B \sum _ { n = 1 } ^ { m} c_1  \delta^{k_1} \left( \frac{ a + L a ^ { 2 } }{ 2  -  { L }  a  }\right)  \right. \nonumber \\
		&\quad\qquad \left.    + L \sum_{ n = 1 }^{m} \frac{ a ^ { 2 }}{\left( 2  -  { L }  a \right) } \left[ d c_1^2  \delta^{2k_1}  + \frac{c_2}{ \delta^2}\right] \right]. \label{eq:prop_biased_bound}
	\end{align}
Using \eqref{eq:biased_sp_par}, we have
	\begin{align}
		&	\mathbb { E } \left[ \left\| \nabla F \left( \theta(R) \right) \right\| ^ { 2 } \right] \nonumber \\
		& \le  \frac{1}{{ m }a } \left[ { 2 (F(\theta(1)) - F(\theta^*))} + 4m a B c_1\delta^{k_1} \right.\\
  &\left. \qquad\qquad+  { L m a ^ { 2 }}\left[ d c_1^2\delta^{2k_1}  + \frac{c_2}{\delta^2}\right] \right] \label{eq:using_1l}\\
		& \le \frac{ 2 (F(\theta(1)) - F(\theta^*))}{{ m }} max\bigg\{L, {m^{\frac{k_1+2}{2k_1+2}}}\bigg\}    + 4 B \left(\frac{c_1 }{m^{\frac{k_1}{2k_1+2}}}  \right)\nonumber\\
  &\qquad+   { L }\left[ \frac{d c_1^2}{m^{\frac{2k_1}{2k_1+2}}}
		 + \frac{c_2}{m^{-\frac{2}{2k_1+2}}}\right] \frac{1}{m^{\frac{k_1+2}{2k_1+2}}} \label{eq:using_defs}.
	\end{align}
	In the above, the inequality \eqref{eq:using_1l} follows by using the fact that $ a \leq 1/L $, while the inequality \eqref{eq:using_defs} uses the settings for $a,\delta$ in \eqref{eq:biased_sp_par}. The main claim follows by rearranging terms in \eqref{eq:using_defs}.
\end{proof}

\section{Simulation Experiments}
\label{sec:experiments}
We present in this section our detailed simulation results where we implement and compare the performance of the various algorithms presented in this paper.
We begin by first presenting the simulation setup.

\subsection{Simulation Setup}
In our experiments, we consider the following $d$-dimensional minimization problem:
\begin{equation}
    \min_{\theta} F(\theta),
\end{equation}
given noisy observations $f(\theta,\xi) = F(\theta) +\xi$.
Here, for any  $\theta$, the noise term $\xi=[\theta^T,1]z$, where $z$ is a $(d+1)$-dimensional multivariate Gaussian distribution $\N$ with mean $0$ and covariance matrix $\sigma^2{\cal{I}}_{d+1}$. In our experiments, we set $\sigma$ = $0.001$ and $0.1$ respectively.

We consider two well known objective functions namely quadratic and Rastrigin with $d = 5, 10, 50$ and $100$ to measure the performance of our proposed methods. We implement various algorithms within our (proposed) broad family of GSPGS algorithms for different values of $k_1$. Note that, $ k_1 = 1$ corresponds to vanilla one-sided SPSA/RDSA/SF. We also implement our B-GSPSA method for different values of  $k_2$, where $k_2 = 1$ represents the vanilla two-sided SPSA. We now describe the two objective functions used.

\textbf{Quadratic Function:} Let $A$ be a $d \times d$ upper triangular matrix with entries $\frac{1}{d}$ and $b$ be a $d$-dimensional vector of ones. Then, the quadratic objective function is defined as follows:
\begin{equation}
\label{eq:quadratic}
    f(\theta,\xi) = \theta^T A \theta + b^T \theta + \xi.
\end{equation}
In our experiment, we use $d = 5, 10, 50$ and $100$, respectively. For $d = 10$, the optimal point $\theta^*$ is a 10-dimensional vector with entries $-0.90909091$.

\textbf{Rastrigin Function:} 
The $d$-dimensional Rastrigin objective function is defined as follows:

\begin{equation}
\label{eq:Rastrigin}
    f(\theta,\xi) = 10d + \sum_{i = 1}^{d}[\theta_i^2 - 10 \cos(2\pi \theta_i)] +\xi,
\end{equation}
where $\xi$ is the same noise r.v. as before. Here optimal $\theta^*$ is the $d$-dimensional vector of zeros and initial $\theta_0$ is set to a $d$-dimensional vector of twos. 

To measure the performance of our proposed algorithm for a given simulation budget, we consider the following metric:
\begin{equation}
    \text{Parameter error} = \frac{||\theta_{\tau} - \theta^*||^2}{|| \theta_0 - \theta^*||^2},
    \label{eq:param_error}
\end{equation}
where $\tau$ is the iteration index when parameter $\theta$ is last updated at the end of the simulation. The parameter error is measured as the ratio of the squared distance between the final update of the parameter $\theta$ and the optimal parameter $\theta^*$  to the squared distance between the initial parameter value and the optimal parameter $\theta^*$. 

\begin{figure*}
	\begin{tabular}{cc}
		\begin{subfigure}{0.5\textwidth} 
			\tabl{c}{\scalebox{0.75}{\begin{tikzpicture}
						\begin{axis}[
							ybar={2pt},
							legend style={at={(0.8,0.9)},anchor=north,legend columns=-1},
							legend image code/.code={\path[fill=white,white] (-2mm,-2mm) rectangle
								(-3mm,2mm); \path[fill=white,white] (-2mm,-2mm) rectangle (2mm,-3mm); \draw
								(-2mm,-2mm) rectangle (2mm,2mm);},
							ylabel={\bf Parameter error},
							xlabel={\bf Truncation parameter},
							symbolic x coords={0, 1, 2, 3, 4, 5},
							xmin={0},
							xmax={5},
							xtick=data,
							ytick align=outside,
							xticklabels={{$k_1 = 1$,$k_1=2$,$k_1=3$,$k_1=4$}},
							xticklabel style={align=center},
							bar width=14pt,
							nodes near coords,
							grid,
							grid style={gray!30},
							width=11cm,
							height=7.25cm,
							]
							\addplot   coordinates { (1,0.05638624152862355)(2,0.052998176882583056)(3,0.03315527659499533)(4,0.014168105885141852)}; 
							\addlegendentry{GSPSA}
						\end{axis}
				\end{tikzpicture}}\\[1ex]}
			\caption{ $d = 5$}
			\label{fig:rastrigin5_1}
		\end{subfigure}
		&
		\begin{subfigure}{0.5\textwidth}
			\tabl{c}{\scalebox{0.75}{\begin{tikzpicture}
						\begin{axis}[
							ybar={2pt},
							legend style={at={(0.8,0.9)},anchor=north,legend columns=-1},
							legend image code/.code={\path[fill=white,white] (-2mm,-2mm) rectangle
								(-3mm,2mm); \path[fill=white,white] (-2mm,-2mm) rectangle (2mm,-3mm); \draw
								(-2mm,-2mm) rectangle (2mm,2mm);},
							ylabel={\bf Parameter error},
							xlabel={\bf Truncation parameter},
							symbolic x coords={0, 1, 2, 3, 4, 5},
							xmin={0},
							xmax={5},
							y tick label style={
								/pgf/number format/.cd,
								fixed,
								fixed zerofill,
								precision=2,
								/tikz/.cd
							},
							xtick=data,
							ytick align=outside,
							xticklabels={{$k_1 = 1$,$k_1=2$,$k_1=3$,$k_1=4$}},
							xticklabel style={align=center},
							bar width=14pt,
							nodes near coords,
							grid,
							grid style={gray!30},
							width=11cm,
							height=7.25cm,
							]
							\addplot   coordinates { (1,0.056319866737310265)(2,0.05300952783308738)(3,0.02972567955701478)(4,0.015375558270423353)}; 
							\addlegendentry{GSPSA}
						\end{axis}
				\end{tikzpicture}}\\[1ex]}
			\caption{ $d = 10$}
			\label{fig:rastrigin10_1}
		\end{subfigure}\\
		\begin{subfigure}{0.5\textwidth}
			\tabl{c}{\scalebox{0.75}{\begin{tikzpicture}
						\begin{axis}[
							ybar={2pt},
							legend style={at={(0.8,0.9)},anchor=north,legend columns=-1},
							legend image code/.code={\path[fill=white,white] (-2mm,-2mm) rectangle
								(-3mm,2mm); \path[fill=white,white] (-2mm,-2mm) rectangle (2mm,-3mm); \draw
								(-2mm,-2mm) rectangle (2mm,2mm);},
							ylabel={\bf Parameter error},
							xlabel={\bf Truncation parameter},
							symbolic x coords={0, 1, 2, 3, 4, 5},
							xmin={0},
							xmax={5},
							y tick label style={
								/pgf/number format/.cd,
								fixed,
								fixed zerofill,
								precision=2,
								/tikz/.cd
							},
							xtick=data,
							ytick align=outside,
							xticklabels={{$k_1 = 1$,$k_1=2$,$k_1=3$,$k_1=4$}},
							xticklabel style={align=center},
							bar width=14pt,
							nodes near coords,
							grid,
							grid style={gray!30},
							width=11cm,
							height=7.25cm,
							]
							\addplot   coordinates { (1,0.056409970213939564)(2,0.05298931904568991)(3,0.03411334653962775)(4,0.010464818512111928)};
							\addlegendentry{GSPSA}
						\end{axis}
				\end{tikzpicture}}\\[1ex]}
			\caption{ $d = 50$}
			\label{fig:rastrigin50_1}
		\end{subfigure} 
		&
		\begin{subfigure}{0.5\textwidth}
			\tabl{c}{\scalebox{0.75}{\begin{tikzpicture}
						\begin{axis}[
							ybar={2pt},
							legend style={at={(0.8,0.9)},anchor=north,legend columns=-1},
							legend image code/.code={\path[fill=white,white] (-2mm,-2mm) rectangle
								(-3mm,2mm); \path[fill=white,white] (-2mm,-2mm) rectangle (2mm,-3mm); \draw
								(-2mm,-2mm) rectangle (2mm,2mm);},
							ylabel={\bf Parameter error},
							xlabel={\bf Truncation parameter},
							symbolic x coords={0, 1, 2, 3, 4, 5},
							xmin={0},
							xmax={5},
							y tick label style={
								/pgf/number format/.cd,
								fixed,
								fixed zerofill,
								precision=2,
								/tikz/.cd
							},
							xtick=data,
							ytick align=outside,
							xticklabels={{$k_1 = 1$,$k_1=2$,$k_1=3$,$k_1=4$}},
							xticklabel style={align=center},
							bar width=14pt,
							nodes near coords,
							grid,
							grid style={gray!30},
							width=11cm,
							height=7.25cm,
							]
							\addplot   coordinates {(1,0.05691517199687485)(2,0.052950174146123116)(3,0.032169927920595975)(4,0.009703615423722004)};
							\addlegendentry{GSPSA}
						\end{axis}
				\end{tikzpicture}}\\[1ex]}
			\caption{ $d = 100$}
			\label{fig:rastrigin100_1}
			
		\end{subfigure}
	\end{tabular}
	\caption{Parameter error for GSPSA algorithm with different values of measurements $m$ and dimension $d$ for Rastrigin objective function for $\sigma = 0.1$. The results are averages over $20$ independent replications. }
	\label{fig:gspsa_01}
\end{figure*}
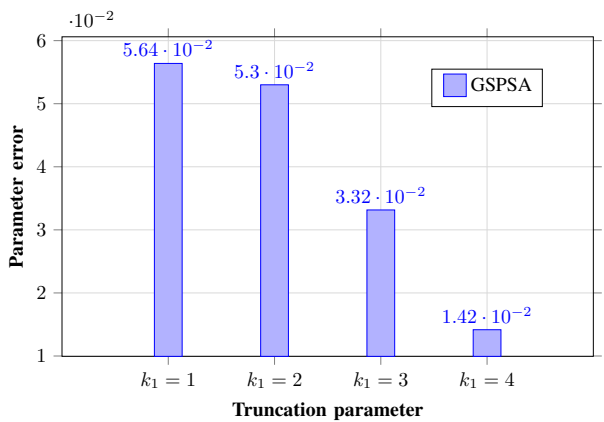
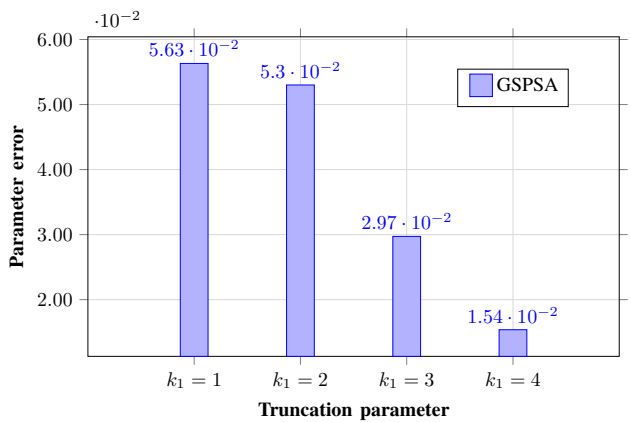
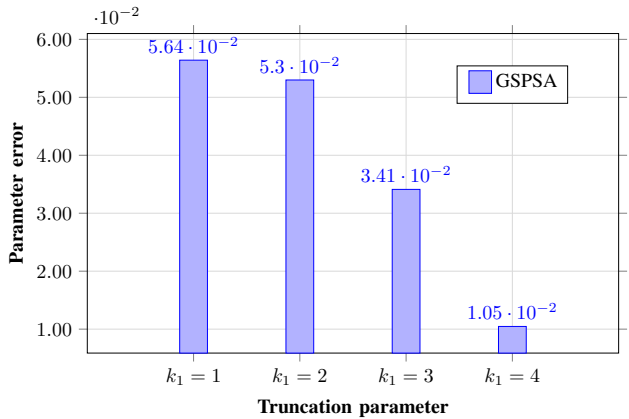
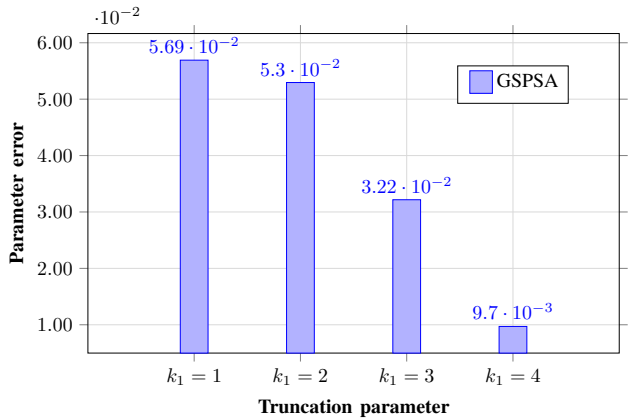
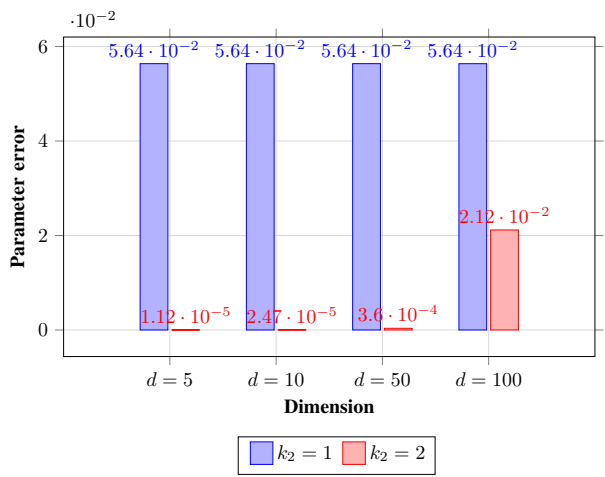
\begin{figure}
	\centering
	\tabl{c}{\scalebox{0.75}{\begin{tikzpicture}
				\begin{axis}[
					ybar={2pt},
					legend style={at={(0.7,-0.25)},legend columns=-1},
					legend image code/.code={\path[fill=white,white] (-2mm,-2mm) rectangle
						(-3mm,2mm); \path[fill=white,white] (-2mm,-2mm) rectangle (2mm,-3mm); \draw
						(-2mm,-2mm) rectangle (2mm,2mm);},
					ylabel={\bf Parameter error},
					xlabel={\bf Dimension},
					symbolic x coords={0, 1, 2, 3, 4, 5},
					xmin={0},
					xmax={5},
					xtick=data,
					ytick align=outside,
					xticklabels={{$d = 5$,$d=10$,$d=50$,$d=100$}},
					xticklabel style={align=center},
					bar width=14pt,
					nodes near coords,
					grid,
					grid style={gray!30},
					width=11cm,
					height=7.25cm,
					]
					\addplot   coordinates { (1,0.05637251299133461)(2,0.05637479300133478)(3,0.05637567089672997) (4,0.056368761177056345)};
					\addplot   coordinates { (1,1.1203763415022605e-05)(2,2.470035728896026e-05)(3,0.0003598070344015937) (4,0.021173414421788748)};
					\legend{$k_2=1$,$k_2=2$}
				\end{axis}
		\end{tikzpicture}}\\[1ex]}
	\caption{Parameter error for B-GSPSA for different values of dimension $d$ with a Rastrigin objective function ($\sigma = 0.1$). Note that $k_2=1$ corresponds to regular balanced SPSA.}
	\label{fig:rastrigin_balanced_1}
\end{figure}

We implement the following algorithms:
\begin{description}
	\item[GSPSA] \ \\This corresponds to Algorithm \ref{alg:GSPSA}, which employs a vector of symmetric $\pm1$-valued Bernoulli random variables for the random perturbations;
	\item[GSF] \ \\This is similar to Algorithm \ref{alg:GSPSA}, except that the gradient estimate is formed using \eqref{eq:gspg-noisy} with $U \sim \N (0, I_d)$ and $V=U$.
	\item[GRDSA]\ \\This is also similar to Algorithm \ref{alg:GSPSA}, but with random perturbations that are a vector of independent uniform random variables \cite{prashanth2017rdsa}. 
	\item[B-GSPSA]\ \\ This corresponds to the algorithm with balanced estimates described in Section \ref{sec: balanced_estimator12}, which employs the same random perturbations as GSPSA.
\end{description}

We test the performance of the algorithms mentioned above on both quadratic and Rastrigin objective functions for different values of $d$.
In our experiments, we use a simulation budget (i.e., number of function measurements) of $2\times 10^{5}$. The number of iterations an algorithm runs depends on the parameter $k_1$ or $k_2$, depending on whether the algorithm employs a one-sided or balanced gradient estimator.
For instance, the number of iterations $\tau = \text{budget}/(k_1+1)$ for $(k_1 +1)$-measurement GSPSA. Thus, $\tau$ is different for different algorithms as it depends on the form of the estimators used.
The step sizes $a(n)$ and perturbation parameter $\delta (n)$ used in the implementation are specified in Tables \ref{tab:hyperparameter} and \ref{tab:hyperparameter_balance}. The constants used in these parameters were found to give the best performance among the various parameter choices that we investigated.
\begin{table}[h!]
	\centering
	\caption{Step size and perturbation parameter settings for various GSPGS algorithms}
	\begin{tabular}{|c|c|c|c|}
		
		\hline
		\text{Function} & Method &  $\delta (n)$ & $a (n)$\\
		\hline 
		& GSPSA & $7.9/n^{0.101}$ & $1/(n+50)$\\
		Quadratic& GRDSA & $26.8/n^{0.101}$ &$1/(n+65)$ \\
		& GSF & $5.9/n^{0.101}$ &$1/(n+50)$ \\ \hline
		& GSPSA & $2.9/n^{0.101}$&$3/(n+50)$ \\
		Rastrigin & GRDSA &$26.4/n^{0.101}$ & $1/(n + 50)$\\
		& GSF & $26.8/n^{0.101}$& $1/(n+50)$\\
		\hline
	\end{tabular}
	
	\label{tab:hyperparameter}
\end{table}

\begin{table}[h!]
    \centering
    \caption{ Step size and perturbation parameter  settings for B-GSPSA}
\begin{tabular}{|c|c|c|c|}
    
    \hline
        \text{Function}& \text{Method} & $\delta (n)$ & $a (n)$\\
        \hline
         Quadratic & B-GSPSA & $26.8/n^{0.101}$ &$1/(n+65)$ \\
         Rastrigin  & B-GSPSA & $2.9/n^{0.101}$&$2/(n+20)$ \\
        \hline
    \end{tabular}
    
    \label{tab:hyperparameter_balance}
\end{table}

\subsection{Results} 
Table \ref{tab:GSPSA} presents the parameter error for GSPSA with noise variance $\sigma=0.001$ under the two different objective functions.
Figure \ref{fig:gspsa_01} presents similar results, but with the noise variance $\sigma=0.1$ for the Rastrigin objective function. In particular, Figures \ref{fig:rastrigin5_1}, \ref{fig:rastrigin10_1}, \ref{fig:rastrigin50_1} and \ref{fig:rastrigin100_1} present the parameter error for dimensions $5,10, 50$ and $100$, respectively. All the reported results are averages over $20$ independent replications.  
From these results, it is apparent that increasing the number of measurements with a fixed simulation budget, leads to improved results for GSPSA. This conclusion holds even as we vary the parameter dimension in the set $\{5,10,50,100\}$.  

Table \ref{tab:nonGSPSAresults} presents the parameter error for three different algorithms under two different objective functions for the case when the noise variance $\sigma=0.001$. 
In particular,  Tables \ref{tab:GRDSA}, \ref{tab:GSF} and \ref{tab:B-GSPSA} present the parameter error for GRDSA, GSF and B-GSPSA algorithms, respectively, with the parameter dimension varying in the set $\{5,10,50,100\}$. 
Figure \ref{fig:rastrigin_balanced_1} presents the parameter error for B-GSPSA with noise variance $\sigma=0.1$. From these results, we observe that, for the same simulation budget, increasing the number of function measurements leads to improved performance, as in the case of GSPSA. Further, B-GSPSA outperforms the other algorithms, as it results in a significantly lower parameter error while using the same simulation budget. This can be attributed to the better bias guarantee, as shown in Lemma \ref{lemma:gspsa-bias-balanced}, for B-GSPSA that also does so with a lower number of function measurements, as compared to GSPSA and the other algorithms.

\section{Conclusions}
\label{conclusions}
We presented in this paper a family of generalized simultaneous perturbation-based gradient search (GSPGS) estimators. These estimators differ from one another in 
(a) the number of function measurements per iteration that each one of them requires and (b) the form of the perturbation distribution used. In particular, we presented generalized unbalanced and balanced SPSA, SF and RDSA gradient estimators. 

We showed analytically that estimators within any specified class requiring more number of function measurements result in a lower estimation bias.
We presented a detailed analysis of both the asymptotic and non-asymptotic convergence of the generalized algorithms. 
Finally, we presented the results of detailed experiments involving various GSPGS estimators and observed that the B-GSPGS estimators perform the best as they provide better accuracy with less number of function measurements. Moreover, as suggested by the theory, estimators requiring more number of function measurements in general result in better performance.

As future work, it would be interesting to study the numerical performance of these algorithms on real-life engineering applications. An orthogonal direction is to extend the generalized SPSA idea to stochastic Newton methods, which require the Hessian matrix to be estimated. 

\section*{Acknowledgements}
SB was supported in part by the J.C.Bose National Fellowship of SERB, Project No.~DFTM/02/3125/M/04/AIR-04 from DRDO under DIA-RCOE, a project from DST under the ICPS program as well as the RBCCPS, IISc. LAP would like to thank Sanjay Bhat for his help with some algebraic simplifications.
\bibliographystyle{IEEEtran}
\bibliography{IEEEabrv,tac_refs}

\end{document}